\newcommand\commentfont[1]{\ttfamily\textcolor{blue}{#1}}
\newcommand{\pluseq}{\mathrel{+}=}
\newtheorem{theorem}{Thm.}
\newenvironment{proofsketch}{\paragraph{Proof Sketch:}}{\hfill$\square$}
\title{Equitable Restless Multi-Armed Bandits:\\ A General Framework Inspired By Digital Health}
\author[1]{\href{mailto:<jkillian@g.harvard.edu>}{Jackson A.~Killian}{}}
\author[2]{Manish Jain}
\author[3]{Yugang Jia}
\author[3]{Jonathan Amar}
\author[3]{Erich Huang}
\author[1,2]{Milind Tambe}
\affil[1]{%
    Computer Science\\
    Harvard University
}
\affil[2]{%
    Google Research
}
\affil[3]{%
    Verily Life Sciences
}
\begin{document}
\maketitle

\begin{abstract}
Restless multi-armed bandits (RMABs) are a popular framework for algorithmic decision making in sequential settings with limited resources. RMABs are increasingly being used for sensitive decisions such as in public health, treatment scheduling, anti-poaching, and --- the motivation for this work --- digital health. For such high stakes settings, decisions must both improve outcomes and prevent disparities between groups (e.g., ensure health equity). We study equitable objectives for RMABs (ERMABs) for the first time. We consider two equity-aligned objectives from the fairness literature, minimax reward and max Nash welfare. We develop efficient algorithms for solving each --- a water filling algorithm for the former, and a greedy algorithm with theoretically motivated nuance to balance disparate group sizes for the latter. Finally, we demonstrate across three simulation domains, including a new digital health model, that our approaches can be multiple times more equitable than the current state of the art without drastic sacrifices to utility. Our findings underscore our work's urgency as RMABs permeate into systems that impact human and wildlife outcomes. Code is available at \url{https://github.com/google-research/socialgood/tree/equitable-rmab}
\end{abstract}

\section{Introduction}

Restless multi-armed bandits (RMABs) are a sequential decision making framework in which a central planner must optimally allocate a restricted set of $b$ resources to $N$ independent control processes (arms) over time, and have been shown to be an effective model for many real world problems. As such, they have been closely studied with wide ranging applications to, e.g., 
public health program optimization \citep{mate2022field,deo2013improving},
wireless network scheduling \citep{modi2019transfer,cohen2014restless},
anti-poaching \citep{qian2016restless}, treatment optimization \citep{lee2019optimal,ayer2019prioritizing}, and more. 
The goal of an RMAB is to maximize utility over all arms. However, especially when the RMAB decision can impact human lives, pure utility maximization can lead to unbalanced outcomes.

\begin{figure}[t]
    \centering
    \includegraphics[width=\columnwidth]{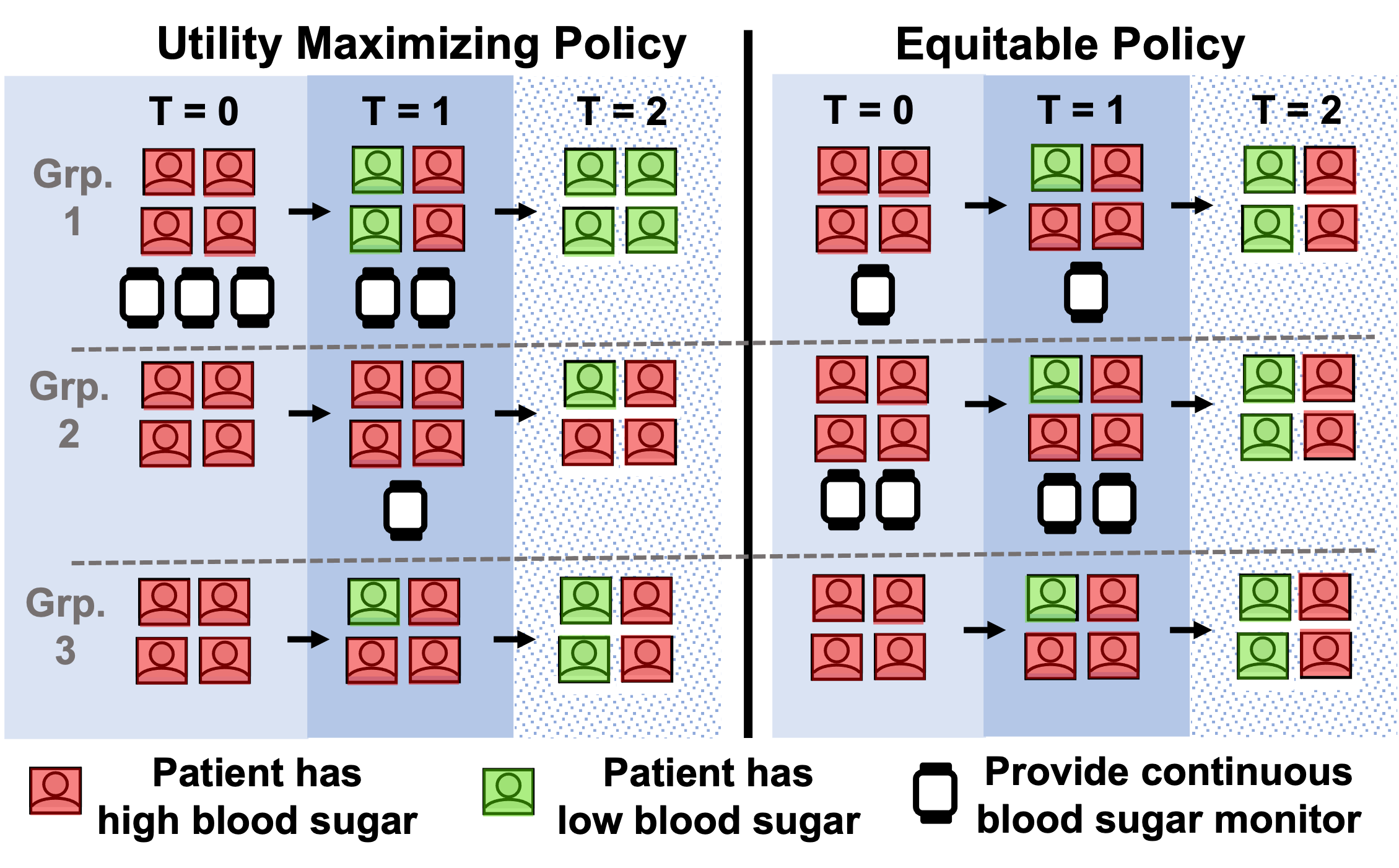}
    \caption{Utility maximizing vs.~equitable policy for resource allocation among three groups with different behavior; 3 monitors can be sent per timestep. Equitable policy balances improvements in outcomes (T=2) across groups.}
    \label{fig:concept}
\end{figure}

Consider the example visualized in Fig.~\ref{fig:concept}. A digital health program delivers care to a cohort of patients with diabetes by helping them reduce their blood sugar. The program has a limited set of expensive continuous blood sugar monitors which can be sent as interventions each month. The patient cohort has three groups: group 1 which are tech savvy and so have quick blood sugar reduction with the monitors, group 2 which have lesser response than group 1, and group 3, which are tech averse, but more self-motivated, and so have blood sugar reductions regardless of intervention.
A utility-maximizing bandit would prioritize sending monitors to group 1 to create the greatest cohort-level reduction in blood sugar. Though at first look it is a natural objective, the cohort-level gains would come at the expense of group 2, exacerbating disparities in health outcomes.

In this scenario--the motivating domain of our work--inequity across group outcomes is unacceptable; health inequities must \textit{not} be exacerbated, but rather directly addressed and rectified \citep{cdcequity}. The same principle holds in other low resource domains \citep{luss2012equitable}.
To address the imbalance of utility maximization in RMABs, some studied the notion of \textit{equality} \citep{li2022towards,herlihy2021planning}, in which groups must share a minimum probability of consideration for action. However, this still fails to guarantee balanced outcomes. In our diabetes problem, equal allocation may unnecessarily send monitors to group 3, at the expense of groups 1 and 2. To address outcome disparity, a planner must reason about what each group \textit{needs} to achieve a goal, and act accordingly. This is the notion of \textit{equity} \citep{cdcequity,luss2012equitable}. In Fig.~\ref{fig:concept} (right), equity is achieved by seeing that, to reach a healthy state, patients with high blood sugar in group 2 need monitors for roughly twice as long as analogous patients in group 1.

To the best of our knowledge, outcome-based equity in RMABs has not been studied. 
Herein, we study for the first time \emph{equitable restless multi-armed bandits} (ERMAB), which require that policies take affirmative steps to distribute resources to ensure high-performing and balanced outcomes across pre-specified \emph{groups} of arms. Note that even with known arm control dynamics, RMABs are at least PSPACE hard to solve exactly \citep{papadimitriou1999complexity}, and our equitable objectives add coupling that further increases complexity.
Therefore, this work builds the optimization principles needed to newly solve the \textit{offline} problem in which arm models are known (common practice in literature \cite{modi2019transfer,lee2019optimal}).

We cast ERMAB as a bilevel optimization: 
the inner problem seeks good policies \textit{within} groups, and the outer problem seeks equitable policies \textit{across} groups. We study ERMABs considering two equity-aligned objectives from the fairness literature: maximin reward (MMR), a conservative objective that prioritizes worst-off groups first, and maximum Nash welfare (MNW), which naturally balances equity and total utility.  Within each objective, we prove key monotonic structures, and address the problem's immense complexity by deriving algorithms from Lagrangian relaxation. For MMR, our proofs enable a fast and well-performing water-filling algorithm. For MNW, varied \textit{group sizes} complicate planning. We show specifically that MNW policies bias allocations toward \textit{small} groups, and prove this is due to the structure of the \textit{composite} function over individual arm value functions in a group. We correct for the bias in a greedy algorithm with nuance that resamples arms to simulate equalized group sizes, computes allocations, then rescales to original group sizes. Crucially, our procedure is fast and finds attractive tradeoffs between utility and equity.

We then demonstrate the importance of our equitable methods across three domains, showing that we find allocations that lead to outcomes 3-5 times more balanced across groups than previous state of the art, with minimal reductions in system-wide utility. 
Critically, in a new digital diabetes care environment we propose and instantiate with public data, we find policies that are both 3 times more balanced \textit{and} achieve 90\% of the maximum efficiency gains. Clearly equitable policies are needed, and we show they are attainable.

\section{Related Work}
We study RMABs, the \textit{restless} generalization of stochastic multi-armed bandits (MABs) in which arms follow Markov decision processes. RMABs have general application and are widely studied, e.g., sensor/machine maintenance \citep{abbou2019group,villar2016indexability}, wireless network scheduling \citep{modi2019transfer,cohen2014restless}, anti-poaching patrol planning \citep{qian2016restless}, and various public health contexts \citep{mate2022field,deo2013improving,lee2019optimal}. 
Fairness in RMAB has only recently been studied, mainly via \textit{equality}, i.e., ensuring all arms have a lower bound probability of receiving an intervention \citep{herlihy2021planning,li2022towards}. Alternatively, \citet{mate2021risk} view fairness as allowing planners to shape rewards to encode their relative risk-averseness. Our work is the first to consider \textit{equity}-focused objectives, viewing fairness through the lens of \textit{equal outcomes}.

Our work also relates to fairness in stochastic MABs. Similarly, much work has focused on equality; \citet{jeunen2021top} ensure each arm receives a minimum threshold of ``exposure''
and \citet{patil2021achieving} give fairness guarantees for the minimum pull threshold problem. \citet{liu2017calibrated} assign arm pulls with probability proportional to each arm's expected reward; note that this ignores whether an arm \textit{needs} a pull to reach higher reward, working against our equity objective. More similar to our setting, \citet{barman2022fairness} take a welfarist approach that ensures even distribution of rewards over \textit{time}, ignoring fairness over arms; we study the more general setting in which fairness must also be ensured over arms. Most related is \citet{ron2021corporate};
the planner specifies a fairness function over all arm rewards, then learns a utilitarian policy which trades off between penalties for violating fairness, versus reward maximization. 
Ultimately their approach determines some minimum set of pulls for each arm. Conversely, in our restless setting, arms have state which evolve with or without arm pulls, and the planner must equitably respond to real-time state changes, a far more complex planning challenge.


\section{Preliminaries}
Restless bandits have $n \in 1,...,N$ arms, discrete per-arm state space $\mathcal{S}_n$, per-arm action space $\mathcal{A}_n = \{0, 1\}$, per-arm transition functions $P_n$ defining the probability of arm $n$ transitioning from state $s$ to state $s^\prime$ given action $a$, per-arm reward function $R_n(s)$ defining the reward for an arm in state $s$, time horizon $H$, and action budget $b$. For ease of exposition, $\mathcal{S}_n$, $\mathcal{A}_n$ and $R_n(s)$ are the same for all arms, so we drop the subscript $n$, but our methods apply to the general setting. Let $\bm{s}^t$ be the $N$-length vector of arm states at time $t$, indexed as $s_n^t$, and let $\bm{a}^t$ be an $N$-length one-hot encoding of the arms that receive actions at time $t$, indexed as $a_n^t$. The planner computes $\pi$, a map from $\bm{s}^t$ to $\bm{a}^t$, subject to per-round budget constraints, $|\bm{a}^t|_1 \le b \hspace{1mm} \forall t \in 1,...,H$.

The objective of a traditional utility-maximizing RMAB is to find a policy $\pi$ that maximizes
\begin{equation}\label{eq:rmab_objective}
     \underset{\bm{s}^{t+1}\sim P(\bm{s}^{t}, \pi(\bm{s}^{t}), \cdot)}{\mathbb{E}}\sum_{t=0}^{H-1} \sum_{n=1}^{N}R(s_n^t)
\end{equation}
given some $\bm{s}^0$. $\pi$ is found by computing $V^0(\bm{s}^0, b)$, where:
\begin{equation}\label{eq:rmab_value_function}
    V^t(\bm{s}^t, b) = \max_{\bm{a}^t}
    \left\{\sum_{n=1}^{N}R(s_n^t) + \mathbb{E}[V^{t+1}(\bm{s}^{t+1}) | \bm{s}^t, \bm{a}^t] \right\}
\end{equation}
\begin{equation}\label{eq:rmab_constraint}
    \text{s.t.}\hspace{1mm}\sum_{n=1}^{N}a_n^k \le b \hspace{2mm} \forall k \in t,...,H
\end{equation}
and $V^H(\cdot)=0$. However, solving this is PSPACE-Hard \citep{papadimitriou1999complexity} due to the exponential state and action spaces resulting from budget coupling. Instead, the Whittle index policy is commonly used~\citep{whittle1988restless}, which derives from the Lagrangian relaxation of Eq.~\ref{eq:rmab_constraint}:
\begin{equation}\label{eq:lagrange_objective}
    L^t(\bm{s}^t, b) = \min_{\bm{\lambda},V^{k\in[t,...,H]}_{n\in[1,...,N]}} \sum_{n=1}^{N}V_{n}^{t}(s_n^t, \lambda^t) + b\sum_{k=t}^{H}\lambda^k
\end{equation}
\begin{align}\label{eq:lagrange_constraints}
    \text{s.t.}\hspace{1mm}
    V_n^k(s_n^k, \lambda) \ge R&(s_n^k) - a_{nj}^{k}\lambda^k + \nonumber\\
    &\sum_{s^\prime \in \mathcal{S}} V_n^{k+1}(s^\prime, \lambda)P(s_n^k, a_{nj}^k, s^\prime) \nonumber \\ 
    \forall k \in t,...,H-1, \hspace{1mm} &\forall j \in \{0, 1\}, \hspace{1mm} \forall s_n^k \in \mathcal{S}, \hspace{2mm} \forall n \in 1,...,N \nonumber
\end{align}
\begin{equation}\label{eq:lagrange_basecase}
    V_n^H(s_n, \lambda) = 0 \hspace{2mm} \forall s_n \in \mathcal{S}, \hspace{2mm} \forall n \in 1,...,N \nonumber
\end{equation}
The Whittle index of an arm in state $s$, at time $t$, is $W^t_n(s)$ and is equal to the $\lambda$, such that $\lambda^k = \lambda \ \forall k$, that makes both constraints $j \in \{0,1\}$ tight for that $V_n^t(s,\lambda)$. In other words, $W^t_n(s)$ is the action charge $\lambda$ such that both actions have equal value. Intuitively, $W^t_n(s)$ measures an arm's long-term budget efficiency, giving high values for arms that \textit{need} budget to produce reward but require \textit{little budget} to do so. 

The solution to Eq.~\ref{eq:lagrange_objective} is to compute the $\bm{\lambda}$ that induces the optimal policies of each arm to spend $bH$ budget in expectation. The Whittle index follows a related process of identifying a $\lambda$ which ensures that $b$ arms are acted on in the current round. Both policies perform extremely well in practice and the Whittle index policy is asymptotically optimal under technical conditions \citep{weber1990index}.

However, such policies which prioritize only the "most efficient" arms lead to inequitable solutions. An arm that may benefit from budget but that is seen as slightly less efficient may never receive budget. In a strictly utilitarian world, this may be optimal, but in settings that impact human outcomes such as healthcare, we must consider more complex objectives that are sensitive to the distribution of outcomes. No such tools yet exist in the RMAB literature. 

\section{Equitable Restless Bandits}
\subsection{Equitable Objectives}
Equitable objectives reason prospectively about \textit{outcomes}, prioritizing resource allocations that lead to well-balanced outcomes across groups. However, for RMABs, computing outcomes is itself PSPACE-Hard, setting this apart from existing literature on fair resource allocation which has few tools for optimal control. However, we identify tractable structure in the objectives below, leading to a key advance that can realign RMABs toward social objectives.

We consider equity across pre-defined groups of arms $\mathcal{G}$, indexed by $g$. Let $M: N\rightarrow{}\mathcal{G}$ be a surjective mapping of arms to groups and let $M^{-1}(g)$ be the set of arms in group $g$. Let the value function of a group $g$, given $|M^{-1}(g)|$-sized state vector $\bm{s}_g^t$ at time $t$ with H-length vector of per-round budgets $\bm{b}_g$ be $V_g^t(\bm{s}_g^t, \bm{b}_g)$ and the corresponding Lagrangian relaxation for a group be $L^t_g(\bm{s}_g^t, \bm{b}_g)$.
Our goal is to find policies that are equitable with respect to outcomes across groups, as measured by relative differences in each group's reward at the end of the horizon. The general form of an equitable RMAB is as follows:
\begin{equation}\label{eq:general_equity_objective}
        \max_{\bm{b}_g} f(V^0_1, V^0_2, ..., V^0_{|\mathcal{G}|})
\end{equation}
\begin{equation}\label{eq:general_equity_constraints}
    \sum_{g \in \mathcal{G}}\bm{b}_g^t = B \hspace{2mm} \forall t \in [0,...,H-1]
\end{equation}
Where $f$ encodes the equity function and $B$ is the total per-round budget constraint over all groups. Next we introduce two choices for $f$, discuss their properties and theoretical motivation, and develop algorithms for optimizing them in the context of ERMABs.
\subsubsection{MaxiMin Reward}
\begin{equation}\label{eq:maximin_reward_objective}
        \max_{\bm{b}_g, r^\star} r^\star
\end{equation}
\begin{equation}\label{eq:maximin_reward_constraints}
    \text{s.t. } V^0_g(\bm{s}^0_g, \bm{b}_g) \ge r^\star \hspace{1mm} \forall g \in \mathcal{G}
\end{equation}
\begin{equation}\label{eq:maximin_reward_budget_constraints}
    \sum_{g \in \mathcal{G}}\bm{b}_g^t = B \hspace{2mm} \forall t \in [0,...,H-1]
\end{equation}
MaxiMin reward (MMR) is a robust objective that maximizes the minimum prospective total reward of any group. This egalitarian approach to equity is well studied, especially in the context of the allocation of goods (readers are encouraged to reference \citep{luss2012equitable}). Variants of maximin reward (e.g., lexicographic maximin) which equate to Eq.~\ref{eq:maximin_reward_objective} are desirable for their Pareto optimality, given monotonicity conditions on the utility functions. That is, given a solution to Eq. ~\ref{eq:maximin_reward_objective}, the solution cannot be permuted in a way such that the utility of one group is increased without decreasing the utility of another. Moreover, in some cases, maximin objectives are desirable for their uncompromising approach to ensuring equal outcomes; the groups that are the worst-off are prioritized for resources without exception. However, in other cases it is precisely this lack of flexibility for which maximin has been criticized. For instance, consider a case with two groups with utility functions $V_1(b) = C(b+1)$ and $V_2(b) = \epsilon (b+1)$ where $C$ is an arbitrarily large positive constant and $\epsilon$ is an arbitrarily small positive constant. Here, maximin will find a solution that is \textit{both} arbitrarily worse than the utility maximizing solution and arbitrarily unequal in outcomes. In such a case where inequity is inherently unachievable, it may desirable to recover some utilitarian efficiency. Thus, it is useful to also consider equity objectives which are capable of incorporating more flexibility between equality of outcomes and efficiency, which we describe next.

\subsubsection{Maximum Nash Welfare}
\begin{equation}\label{eq:mnw_objective}
        \max_{\bm{b}_g} \prod_{g \in \mathcal{G}} V^0_g(\bm{s}^0_g, \bm{b}_g)
\end{equation}
\begin{equation}\label{eq:mnw_budget_constraints}
    \sum_{g \in \mathcal{G}}\bm{b}_g^t = B \hspace{2mm} \forall t \in [0,...,H-1]
\end{equation}
Maximum Nash welfare (MNW) optimizes the product of group outcomes. This objective is also well studied in the context of the allocation of goods \citep{moulin2004fair,caragiannis2019unreasonable} and is desirable for its theoretic properties, i.e., Pareto optimality under monotonic value functions, as well as its ability to find solutions that naturally tradeoff between equity and efficiency. It achieves this by assigning diminishing returns for each additional marginal increase in the utility of any value function. This naturally leads to allocations which are relatively balanced, as well as (log-)proportionally scaled according to each group's expected utility increase from additional allocations. We can see the effect of this with an example: 
consider $V_1(b) = 2b+1$ and $V_2(b) = 4(b+1)$ with $B=2$. MNW splits budget evenly giving $(V_1(1), V_2(1))=(3,8)$, whereas MMR puts all budget in the worst-off group giving $(V_1(2), V_2(0))=(5,4)$, and where max utility puts all budget on the most efficient group giving $(V_1(0), V_2(2))=(1,12)$. Observe that the MNW solution ranks second of the three solutions in terms of group utility deviation ($=5$), and the sum of group utilities ($=11$).

\subsubsection{Metrics and Important Considerations}
\label{sec:metrics}
To measure the equity of our policies, we will employ the widely used measure from economics known as the Gini index \citep{gini1909}, which measures the mean average deviation of members of a population --- 0 is perfect equality and 1 is perfect inequality. However, since we are interested in \textit{group} fairness, where groups may be of different sizes, we report the gini index of per-group average outcomes, i.e., Gini$(V_1 / |M^{-1}(1)|, V_2 / |M^{-1}(2)|, ...,)$. 

Additionally, there are three important considerations to be made before we can solve Eq.~\ref{eq:maximin_reward_objective} and Eq.~\ref{eq:mnw_objective}. First, is that computing $V^0_g(\bm{s}^0_g, \bm{b}_g)$ exactly is PSPACE hard \citep{papadimitriou1999complexity}, and thus intractable. Therefore, our approach will be to substitute the tractable upper bound $L_g^0(\bm{s}_g^0,\bm{b}_g)$, then bound the difference. Second, since we are interested in group-size normalized outcomes, we need to incorporate group size into each objective. For MMR, this is as simple as optimizing $L_g^0(\bm{s}_g^0,\bm{b}_g) / |M^{-1}(g)|$. However, it is slightly trickier for MNW. Since MNW is scale-invariant, normalizing by group size does not change allocations. In fact, interestingly, in the presence of unequal sized groups, MNW is heavily biased toward small groups. In section \ref{section:algorithm_mnw}, we discuss why and how to adjust for this in the group-average case we consider. 
Finally, to develop algorithms for solving each objective that retain their desirable properties, we must understand the structure of the utility functions. We show in the next section that both $V$ and $L$ have a desirable monotone increasing form.

\subsection{Problem Structure}

\begin{theorem}\label{thm:V_monotonic}
$V_g^0(\cdot,b)$ increases monotonically in $b$.
\end{theorem}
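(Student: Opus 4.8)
The plan is to prove the statement by backward induction on the time index $t$, establishing the slightly stronger claim that $V_g^t(\bm{s},\bm{b}_g)$ is non-decreasing in $\bm{b}_g$ (componentwise) for every fixed state $\bm{s}$ and every $t\in\{0,\dots,H\}$; the theorem is then the case $t=0$ (and, for the scalar phrasing, the restriction to the constant vector $\bm{b}_g=b\mathbf{1}$, since the group-level Bellman recursion is the analogue of Eq.~\ref{eq:rmab_value_function}–\ref{eq:rmab_constraint} restricted to the arms in $M^{-1}(g)$ with per-round budget $b_g^t$ at round $t$). The whole argument rests on one elementary observation: enlarging a per-round budget only relaxes the feasibility constraint $|\bm{a}^t|_1\le b_g^t$, so the Bellman maximization at that round is taken over a superset of admissible actions, and a maximum over a larger set cannot decrease.

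For the base case, at $t=H$ we have $V_g^H(\cdot,\cdot)=0$ identically, which is trivially non-decreasing in $\bm{b}_g$. For the inductive step, fix $t<H$ and assume $V_g^{t+1}(\bm{s}',\bm{b}_g)$ is non-decreasing in $\bm{b}_g$ for every state $\bm{s}'$. Take budget vectors $\bm{b}_g\le\bm{b}_g'$ componentwise. For any action $\bm{a}^t$ that is feasible under $\bm{b}_g$, i.e.\ $|\bm{a}^t|_1\le b_g^t\le b_g'^{\,t}$, the immediate reward $\sum_{n\in M^{-1}(g)}R(s_n^t)$ does not depend on the budget, while the continuation term obeys $\mathbb{E}[V_g^{t+1}(\bm{s}^{t+1},\bm{b}_g)\mid \bm{s}^t,\bm{a}^t]\le \mathbb{E}[V_g^{t+1}(\bm{s}^{t+1},\bm{b}_g')\mid \bm{s}^t,\bm{a}^t]$, because the inductive hypothesis gives the inequality pointwise in $\bm{s}^{t+1}$ and expectation preserves it. Hence the bracketed objective of the Bellman equation, evaluated at $\bm{a}^t$, is no larger under $\bm{b}_g$ than under $\bm{b}_g'$. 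Maximizing both sides over the feasible set $\{\,\bm{a}^t: |\bm{a}^t|_1\le b_g^t\,\}$, and then observing that the resulting right-hand side is itself bounded above by the maximum over the larger feasible set $\{\,\bm{a}^t:|\bm{a}^t|_1\le b_g'^{\,t}\,\}$ that defines $V_g^t(\bm{s}^t,\bm{b}_g')$, yields $V_g^t(\bm{s}^t,\bm{b}_g)\le V_g^t(\bm{s}^t,\bm{b}_g')$. This closes the induction, and $t=0$ is exactly the theorem.

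I do not anticipate a genuine obstacle: this is a structural monotonicity fact that follows purely from constraint relaxation together with monotonicity of conditional expectation. The two points that need care are (i) phrasing the inductive hypothesis pointwise in the state, so the expectation comparison step is legitimate, and (ii) being explicit that $\bm{b}_g$ is carried through the recursion as a fixed per-round vector rather than a depletable stock, so that "a larger budget" genuinely means a weakly larger feasible action set at every remaining round simultaneously. (A strict version of "increases" on a given regime would additionally require a non-degeneracy assumption guaranteeing that acting strictly helps in some reachable state; the weak monotonicity proved above is what the subsequent Pareto-optimality claims and the water-filling / greedy algorithms actually rely on.)
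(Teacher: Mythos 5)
Your proof is correct and takes essentially the same route as the paper's: the paper's own argument is a one-line observation that increasing $b$ enlarges the feasible action set in Eqs.~\ref{eq:rmab_value_function}--\ref{eq:rmab_constraint}, so the maximum cannot decrease, and your backward induction simply makes that observation rigorous (in particular handling the continuation-value term via the pointwise inductive hypothesis). Your closing remark that the claim is really weak monotonicity, which is all the downstream algorithms need, is also accurate.
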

\begin{proof}
This can be seen from Eqs.~\ref{eq:rmab_value_function} and \ref{eq:rmab_constraint}. $V$ is a maximization subject to a constraint set with size that increases with $b$. Thus increasing $b$ monotonically increases the optimal value of the optimization problem defined by $V$.
\end{proof}

\begin{theorem}\label{thm:L_monotonic}
$L_g^0(\cdot,b)$ is monotone increasing and concave in $b$.
\end{theorem}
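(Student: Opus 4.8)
The plan is to exploit the variational form of $L_g^0$ already written down in Eq.~\ref{eq:lagrange_objective} (specialized to the arms of group $g$). The crucial observation is that the budget $b$ appears there in exactly one place --- the linear term $b\sum_k \lambda^k$ in the objective --- while the Bellman-inequality constraints of Eq.~\ref{eq:lagrange_constraints} are completely independent of $b$. Performing the inner minimization over the auxiliary variables $\{V_n^k\}$ for a fixed dual vector $\bm{\lambda}\ge 0$ is just the LP characterization of a (finite-horizon) MDP: it returns the optimal per-arm value functions under action charge $\bm{\lambda}$. Hence we can rewrite
\[
 L_g^0(\bm{s}_g^0, b) \;=\; \min_{\bm{\lambda}\ge 0}\Bigl\{\, \Phi_g(\bm{\lambda}) \;+\; b\,\textstyle\sum_{k}\lambda^k \,\Bigr\},
\]
where $\Phi_g(\bm{\lambda}) = \sum_{n\in M^{-1}(g)} V_n^0(s_n^0,\bm{\lambda})$ collects the (optimal) per-arm value functions and does not depend on $b$. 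Thus $b\mapsto L_g^0(\bm{s}_g^0,b)$ is a pointwise infimum of a family of functions, indexed by $\bm{\lambda}\ge 0$, that are \emph{affine} in $b$.

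From this, both claims are immediate. Concavity: a pointwise infimum of affine --- hence concave --- functions is concave, so $L_g^0(\bm{s}_g^0,\cdot)$ is concave in $b$. Monotonicity: every member $b\mapsto \Phi_g(\bm{\lambda}) + b\sum_k\lambda^k$ of the family has slope $\sum_k\lambda^k \ge 0$ (this is where $\bm{\lambda}\ge 0$ is used), so each member is non-decreasing in $b$, and an infimum of non-decreasing functions is non-decreasing. An alternative monotonicity argument mirrors the proof of Thm.~\ref{thm:V_monotonic}: the feasible set of the minimization defining $L_g^0$ does not change with $b$, whereas increasing $b$ can only increase the objective value at every feasible point (since $\bm{\lambda}\ge 0$), and raising every attainable value raises the minimum. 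The same reasoning yields joint concavity and componentwise monotonicity in the entire per-round budget vector $\bm{b}_g$; the stated scalar version is just the restriction to $\bm{b}_g = b\bm{1}$, which preserves concavity and monotonicity.

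The proof is short, so the main obstacle is really just getting two bookkeeping points right. First, monotonicity hinges on $\bm{\lambda}\ge 0$: this is the multiplier of the per-round \emph{inequality} budget constraint $\sum_{n\in M^{-1}(g)} a_n^k \le b_g^k$ (compare Eq.~\ref{eq:rmab_constraint}), so one must be explicit that $L_g^0$ relaxes an inequality-constrained, not equality-constrained, problem. Second, one must justify that eliminating the inner $\{V_n^k\}$ variables is valid and leaves the affine-in-$b$ structure intact --- which it does, because $b$ never enters the Bellman inequalities --- so that the elementary ``infimum of affine functions'' lemma applies verbatim. Everything else is routine.
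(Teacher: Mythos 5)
Your proof is correct, but it reaches concavity by a genuinely different and cleaner route than the paper. For monotonicity the two arguments coincide in substance: the paper differentiates, $\tfrac{dL}{db}=\sum_k\lambda^k\ge 0$ by an envelope argument, while you observe that each member of the family $b\mapsto \Phi_g(\bm{\lambda})+b\sum_k\lambda^k$ is non-decreasing and take the infimum --- the same fact ($\bm{\lambda}\ge 0$ from relaxing an inequality constraint) doing the work in both. The real divergence is in concavity. The paper's argument (spelled out in its appendix) sets $\tfrac{dL}{d\bm{\lambda}}=0$ to conclude that the optimal policy spends $b$ in expectation per round, then argues that increasing $b$ forces the optimal multipliers $\lambda^k$ down so that the slope $\tfrac{dL}{db}$ decreases; crucially, the step ``lower $\bm{\lambda}$ implies more expected actions for \emph{all} $\bm{\lambda}$'' is acknowledged there to be equivalent to Whittle's indexability condition, so the paper's concavity proof is conditional on that assumption (and on differentiability that the piecewise-linear $L$ does not literally possess). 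Your argument --- eliminate the inner $\{V_n^k\}$ variables, which is legitimate since $b$ never enters the Bellman inequalities, and invoke the fact that a pointwise infimum of affine functions of $b$ is concave --- is unconditional, requires no smoothness, and extends verbatim to joint concavity in the whole budget vector $\bm{b}_g$. What the paper's heavier route buys in exchange is structural insight it uses elsewhere: the optimal action charges decrease as budget grows, which underpins the intuition for the water-filling and Whittle-index machinery. Both proofs are valid; yours is the more robust one for the theorem as stated.
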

\begin{proof}
$\frac{dL}{db} = H\sum^{T}_{k=0}\lambda_k$. Further, all $\lambda_k\ge 0$ following the Lagrangian relaxation of upper bound constraints in the \texttt{max} problem. Thus $L_g^0(\cdot,b)$ is monotone increasing in $b$. Moreover, at the optimal solution of $L_g^0(\cdot,b)$, $\lambda_k$ have values such that the optimal policies $V_n(s_n,\lambda)$ spend bH budget in expectation (this follows from $\frac{dL}{d\bm{\lambda}}$). As b increases, the optimal policies spend more in expectation, implying lower action charges $\lambda_k$, implying that $\frac{dL}{db}$ is a decreasing function, thus $L$ is concave in $b$.
\end{proof}

\begin{theorem}\label{thm:bound_L_V_difference}
$L_g^0(\cdot,b) - V_g^0(\cdot,b) < \epsilon$ where $\epsilon = (N-b)H$.
\end{theorem}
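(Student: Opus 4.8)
The plan is to sandwich the relaxed value between the true value and the \emph{unconstrained} value. Writing $n_g:=|M^{-1}(g)|\le N$ and letting $V_g^0(\cdot,N)$ denote the group value when every arm may be pulled every round (budget $N\ge n_g$ is no constraint at all), I would first establish
\[
V_g^0(\cdot,b)\ \le\ L_g^0(\cdot,b)\ \le\ V_g^0(\cdot,N),
\]
so that $L_g^0(\cdot,b)-V_g^0(\cdot,b)\le V_g^0(\cdot,N)-V_g^0(\cdot,b)$, and then bound the latter by $(N-b)H$. Throughout I use the standard normalization that per-step rewards lie in $[0,1]$ (without it the bound simply carries an extra $\max_s R(s)$ factor).

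The left inequality is weak Lagrangian duality, already invoked in the paper: $L$ relaxes the per-round budget constraint (Eq.~\ref{eq:rmab_constraint}), hence upper-bounds $V$. For the right inequality I would plug $\bm\lambda\equiv 0$ into the minimization of Eq.~\ref{eq:lagrange_objective}: the term $b\sum_k\lambda^k$ vanishes and the constraint set decouples into, for each arm $n$, the pair of Bellman inequalities over $a\in\{0,1\}$ with the charge term removed; the pointwise-smallest feasible $V_n$ is then exactly arm $n$'s own optimal value function $V_n^{\star}$ (the usual LP-for-MDP fact). So the value of Eq.~\ref{eq:lagrange_objective} at $\bm\lambda=0$ is $\sum_{n\in M^{-1}(g)}V_n^{\star}=V_g^0(\cdot,N)$, and since $L_g^0$ minimizes over $\bm\lambda\ge 0$ we get $L_g^0(\cdot,b)\le V_g^0(\cdot,N)$.

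It remains to bound $V_g^0(\cdot,N)-V_g^0(\cdot,b)$. Let $U_n$ be arm $n$'s expected return if it is never acted on; as above $V_g^0(\cdot,N)=\sum_{n}V_n^{\star}$. For a lower bound on $V_g^0(\cdot,b)$, fix \emph{any} set $S\subseteq M^{-1}(g)$ of $b$ arms and consider the policy that runs each arm of $S$ under its own optimal policy and never acts on the rest; it spends at most $b$ pulls per round, so $V_g^0(\cdot,b)\ge\sum_{n\in S}V_n^{\star}+\sum_{n\notin S}U_n$. Subtracting, $V_g^0(\cdot,N)-V_g^0(\cdot,b)\le\sum_{n\notin S}(V_n^{\star}-U_n)\le(n_g-b)\max_n(V_n^{\star}-U_n)$. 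Finally, the time-$0$ reward is the same under every policy and the remaining $H-1$ rewards lie in $[0,1]$, so $V_n^{\star}-U_n\le H-1$; hence $V_g^0(\cdot,N)-V_g^0(\cdot,b)\le(n_g-b)(H-1)\le(N-b)(H-1)<(N-b)H$ whenever $b<N$ (the only nontrivial case), which is the claim.

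I expect the crux to be the middle step $L_g^0(\cdot,b)\le V_g^0(\cdot,N)$: one must recognize the constraint system of Eq.~\ref{eq:lagrange_objective} at $\bm\lambda=0$ as a bank of independent per-arm Bellman inequalities and check that its minimal solution really is $\sum_n V_n^{\star}$ and not something larger --- this is exactly where the relaxation's decoupling of arms is doing the work. The rest (the trivial fixed-$S$ feasible policy, and the $H-1$ per-arm reward gap) is routine bookkeeping, modulo the implicit reward normalization.
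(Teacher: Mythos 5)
Your proof is correct and follows the same high-level architecture as the paper's (sandwich $L$ between $V$ and a decoupled per-arm sum, lower-bound $V$ by a fixed-subset feasible policy, and charge the residual $N-b$ arms at most a horizon's worth of reward each), but the two key steps are executed differently, and in both cases your version is cleaner. For the upper bound on $L$, the paper establishes that the Lagrangian is tight at $b=N$ (where it argues the optimal multipliers are $\bm\lambda=0$ and the decoupled policy is ``always act''), and then invokes monotonicity of $L$ in $b$ to conclude $L^0(\cdot,b)\le\sum_n V_n^0(s_n^0,\overline{\pi}_n)$; you instead simply evaluate the minimization at the feasible point $\bm\lambda=0$, which immediately gives $L_g^0(\cdot,b)\le\sum_n V_n^\star=V_g^0(\cdot,N)$ without needing the monotonicity theorem or any claim that always-acting is optimal when actions are free. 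For the lower bound and the cancellation, the paper explicitly assumes all arms share the same transition function and start state so that the $b$ ``acted-on'' terms in its upper bound cancel against the $b$ terms of its fixed-subset policy; your term-by-term cancellation ($\sum_n V_n^\star$ minus $\sum_{n\in S}V_n^\star+\sum_{n\notin S}U_n$) needs no homogeneity assumption at all, so your argument proves the bound for heterogeneous arms, which is the setting the theorem is actually used in. You also tighten the per-arm gap to $H-1$ by noting the time-$0$ reward is policy-independent, which is what actually delivers the strict inequality the theorem states (the paper's chain ends at $\le (N-b)H$). The one step you flagged as the crux --- that the minimal feasible solution of the decoupled Bellman inequalities at $\bm\lambda=0$ is $\sum_n V_n^\star$ --- is indeed the standard LP-for-MDP fact and holds as you state it, so there is no gap.
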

\begin{proofsketch}
For this analysis, we assume arms have the same transition functions and start state. First, we establish that the Lagrangian bound is tight at $b=0$ and $b=N$, and that the value function $V$ decouples at these values, requiring only that rewards are normalized. Then, we bound the gap between the upper bound $L(s, b)$, and some lower bound on $V(s,b)$. Note that any policy over the $N$ arms is necessarily a lower bound on the coupled optimal $V(s,b)$. It is convenient then to analyze the policy which acts on the same $b$ arms each round, consisting of per-arm value functions $V_n(s, b=1)$ which act every round and $V_n(s, b=0)$ which never act. The Lagrange bound is itself everywhere upper bounded by $N V_n(s,b=1)$. Then, for a given value of $b$, exactly $b$ terms cancel (via same transition function and start state assumption) between the lower and upper bound, and $N-b$ terms of $V_n(s,b=1)$ remain. Finally, given normalized rewards, $V_n(s,b=1) \le H$, giving $\epsilon$.
\end{proofsketch}

\subsection{Solving the Equitable Objectives}
First, note that the equitable optimization framework of Eq.~\ref{eq:general_equity_objective} is a natural bilevel optimization problem, in which the outer loop seeks to allocate budgets amongst groups and the inner loop computes the optimal policy and value function within a group, given some budget $b_g$. Such optimization problems can quickly become computationally intensive if either the inner or outer loop is inefficient. Fortunately, the inner loop is itself a well-studied problem (i.e., solving traditional RMABs), and so efficient algorithms are available for solving it, namely by solving Eq.~\ref{eq:lagrange_objective} or following the Whittle index policy \citep{whittle1988restless}. However, the outer optimization requires new techniques in the context of RMABs and our objectives. Most important is the structure of the objective with respect to the decision variables $b_g$. Towards this end, we showed in Theorems~\ref{thm:V_monotonic} and \ref{thm:L_monotonic} that $V$ and $L$ are monotonically increasing in $b$. 

\subsubsection{Solving MMR}
\label{section:algorithm_mmr}
For the \texttt{maximin} objective, monotonicity directly implies the optimality of a greedy water filling approach which iteratively assigns an additional unit of budget to the group with the lowest average value function until the budget is exhausted. The simplicity of the approach leads to remarkable computational efficiency for a problem that was previously otherwise intractable. 



\subsubsection{Solving MNW}
\label{section:algorithm_mnw}
For designing an algorithm for MNW, it is more convenient to view the equivalent log form of the objective:
\begin{equation}\label{eq:mnw_objective_log}
        \max_{\bm{b}_g} \sum_{g \in \mathcal{G}} \log \left[V^0_g(\bm{s}^0_g, \bm{b}_g)\right]
\end{equation}
With the concavity and monotonicity established in Theorem~\ref{thm:L_monotonic}, and maintained by a $\log$ transformation, it is clear that Eq.~\ref{eq:mnw_objective} can be solved optimally with a greedy approach. Specifically, we greedily assign additional units of budget to groups which achieve the maximum difference of the \textit{logs} of the value functions, i.e., line 7 of algorithm ~\ref{alg:greedy_mnw}. 
However, the MNW objective requires special consideration for optimizing group-averaged outcomes when group sizes are unequal. The reason is that, as group sizes increase, their group value functions $V_g$ scale less quickly than the group size. To shed light on this, we define 
\textit{arm-value functions}, namely, $v^a(b)$, which capture the value function of the single arm $a$, given budget $b$, and give the following theorem.

\begin{theorem}\label{thm:mnw_composite_ineqs}
The following inequalities hold $\hspace{1mm} \forall g, \hspace{1mm} \forall a \in M^{-1}(g), \hspace{1mm} \forall b$:
\begin{equation}
v^a(b) \le V_g(b) \le |M^{-1}(g)| \max_{c\in M^{-1}(g)}v^c(b)
.
\end{equation}
\end{theorem}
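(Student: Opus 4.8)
The plan is to prove the two inequalities separately, in each case by exhibiting an explicit (sub)policy and invoking feasibility together with the monotonicity of single-arm value functions: a single arm is a group of size one, so by Theorem~\ref{thm:V_monotonic} each $v^a(\cdot)$ is monotone increasing in $b$. Throughout I use that, given a fixed action sequence, arms evolve independently and each arm's reward depends only on its own state trajectory, so any group value function decomposes additively over arms by linearity of expectation.

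For the lower bound $v^a(b) \le V_g(b)$, I would consider the group policy $\pi'$ that devotes the budget to arm $a$ alone --- running arm $a$'s single-arm optimal policy --- and plays the passive action on every other arm of $g$. Since actions are binary, $\pi'$ uses at most $\min(b,1) \le b$ pulls per round, so it is feasible for the group problem with budget $b$. Its value equals $v^a(b)$ plus the expected passive reward accrued by the remaining $|M^{-1}(g)| - 1$ arms; because rewards are nonnegative (the normalization also used in Theorem~\ref{thm:bound_L_V_difference}), this is at least $v^a(b)$. As $V_g(b)$ is the maximum over feasible group policies, $V_g(b) \ge v^a(b)$.

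For the upper bound I would argue in the opposite direction. Fix the group-optimal policy $\pi_g$ attaining $V_g(b)$ and write $V_g(b) = \sum_{c \in M^{-1}(g)} V^{\pi_g}_c(b)$, where $V^{\pi_g}_c(b)$ is the expected total reward collected by arm $c$ under $\pi_g$. Under $\pi_g$, arm $c$ is pulled at most $\min(b,1) \le b$ times in any round (binary actions, and at most $b$ pulls in total across the group), so the marginal behavior of arm $c$ under $\pi_g$ is realizable by a feasible single-arm policy with per-round budget $b$; hence $V^{\pi_g}_c(b) \le v^c(b) \le \max_{c' \in M^{-1}(g)} v^{c'}(b)$. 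Summing over the $|M^{-1}(g)|$ arms gives $V_g(b) \le |M^{-1}(g)|\, \max_{c} v^c(b)$.

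The step I expect to require the most care is the upper bound: making rigorous that the per-arm ``restriction'' of the group policy is itself a legitimate single-arm policy with budget $b$. The point is that the per-round budget constraint couples arms only through a cap on the number of simultaneous pulls, so projecting onto a single arm can only relax the constraint, and conditioning $\pi_g$'s pull decisions on arm $c$'s own history yields a bona fide (randomized, history-dependent) single-arm policy pulling arm $c$ at most once per round, whose value therefore cannot exceed $v^c(b)$. Everything else --- feasibility counting, nonnegativity of rewards, and the additive decomposition --- is routine bookkeeping. (If one prefers not to assume nonnegative rewards, the lower bound can instead be stated directly against the value of the ``all other arms passive'' policy, or rewards can be shifted to be nonnegative, which leaves allocations unchanged.)
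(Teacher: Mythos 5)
Your proposal is correct and follows essentially the same route as the paper: the lower bound is obtained by exhibiting the feasible sub-policy that gives all budget to arm $a$ and plays passive elsewhere (plus nonnegativity of rewards), and the upper bound by decoupling the group problem into per-arm single-arm problems and summing, then bounding the sum by $|M^{-1}(g)|$ times the maximum. The only cosmetic difference is that the paper reaches $\sum_a v^a(b)$ by first relaxing to per-arm budget $1$ (every arm may act every round) and then invoking monotonicity $v^a(1)\le v^a(b)$, whereas you marginalize the group-optimal policy directly onto each arm at budget $b$; your version is slightly more explicit about why the per-arm restriction is a legitimate single-arm policy, but the underlying idea is identical.
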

\begin{proof}
There exists a composite function $h_g$ which maps the set $\mathcal{A} = \{v^a : a \in M^{-1}(g)\}$ to group-value functions $V_g$. E.g., if group $g$ has two arms $c$ and $d$, then $V_g(b) = h_g(v^c, v^d)(b)$. We are interested in the relationship of $h_g$ and one of its input arm value functions $v^a$. First, for any arm set $\mathcal{A}$, $h_g(\mathcal{A})(b) \ge v^a(b) \hspace{1mm} \forall b \hspace{1mm} \forall a \in \mathcal{A}$. That is, $h_g$ is always a monotone increasing mapping (assuming non-negative reward functions, which can be done without loss of generality). This is because for all sets $\mathcal{A}$ such that $|\mathcal{A}| > 2$, $h_g(\mathcal{A})(b)$ is lower bounded by $v^a(b) + v^c(0)$ for some $a,c \in \mathcal{A}$. On the other hand, since $v^a(b)$ are monotone increasing functions of $b$, $h_g(\mathcal{A})(b)$ is also upper bounded by $|\mathcal{A}|\max_a{v^a(b)}$, since the composite causes $b$ budget to be shared by $|\mathcal{A}|$ value functions. 
\end{proof}
Its is precisely this upper bound that causes the issue. Since $V_g(b)$ scales more slowly than the group size, the slope of the $V_g(b)$ curve decrease as $|M^{-1}(g)|$ increases. Since MNW prioritizes value functions with maximal log differences, e.g., $\log(V_g(b+1)) - \log(V_g(b))$, it equivalently prioritizes value functions with maximal log ratios $\log(V_g(b+1)/V_g(b))$, or simply $V_g(b+1)/V_g(b)$. Thus larger groups, with their smaller slopes in $b$, will be prioritized after equivalent groups of smaller size.

Adjusting for this effect is tricky at first. Since MNW is scale invariant, we cannot simply divide or multiply by group size as we did for MMR. 

How to find MNW budget allocations that lack group-size bias, but still capture potential average gains of each group? The key is in the $h_g$ composite function. While it is hard to know the general form of $h_g$ (it is a function of the MDP probabilities of all arms in its input), we conjecture: 
\begin{align}  
    h_g(\mathcal{A} | \mathcal{B})(Cb) \approx C*h_g(\mathcal{A})(b) \\
    \text{where } v \sim \mathcal{A} \hspace{1.5mm} \forall v \in \mathcal{B} \text{ and } C = (|\mathcal{A}|+|\mathcal{B}|)/\mathcal{A} \nonumber
\end{align}
In words, as the set of arms passed to $h_g$ increases, the value of $h_g$ \textit{at budget value scaled by the new group size} is roughly equivalent to $h_g$ on the original set of arms times the scale factor $C$ of the new group size, assuming new arms are sampled from the same distribution as $\mathcal{A}$. If true, this would allow circumventing the group size bias problem of MNW as follows:
(1) re-sample arms in smaller groups until all groups are the same size (2) compute group value functions (3) solve for the MNW allocation (4) re-scale the budget allocations based on group size. In all experiments, we find the conjecture is well supported. We compare this re-scaling approach to a naive version of MNW which ignores group sizes, and find that it both (1) creates dramatic improvements in the equity and efficiency over the naive approach and (2) achieves in the new group-average setting the so-desired \textit{balance} of equity and efficiency that MNW objectives enjoy in other problems. 


\subsubsection{Complexity}
The full algorithms for computing the optimal objectives for MMR and MNW are given in Algs.~\ref{alg:water_filling_mmr} and ~\ref{alg:greedy_mnw}, respectively. The complexity of both algorithms is $\mathcal{O}(B|\mathcal{G}|\mathcal{C}_{\textsc{Inner}})$, where $\mathcal{C}_{\textsc{Inner}}$ is the complexity of 
the inner optimization. Note that \textsc{InnerOpt} is a subroutine to solve the inner optimization problem. To compute solutions that satisfy the bound in Thm.~ \ref{thm:bound_L_V_difference} we can directly solve Eq.~\ref{eq:lagrange_objective} with a linear program, which has approximate quadratic complexity in the number of variables \citep{jiang2020faster}, i.e., $\mathcal{O}(N^2|\mathcal{S}|^2H^2)$. Alternatively, since Whittle index solutions are widely used, and can be efficiently computed with binary search of complexity $\mathcal{O}(N|\mathcal{S}|H\log(\frac{1}{\gamma}))$ ($\gamma$ is desired precision) \citep{killian2022restless}, to improve the adoptability of our approach, we give a subroutine \textsc{WhittleToLagrange} in Alg.~3 in the appendix, which can also solve \textsc{InnerOpt}. This has complexity $\mathcal{O}(N\log(N)|\mathcal{S}|H)$, and is what we use in experiments. Finally, to take actions in the simulation environment, we take the relative budget allocations output by Algs.~\ref{alg:water_filling_mmr} and ~\ref{alg:greedy_mnw} for each group $g$, and act on the arms with the top $b_g$ Whittle indexes.

\begin{algorithm}[t]
\SetAlgoLined
\KwData{$\mathcal{G}, B, \bm{s}, h$}
$\bm{b} = 0$ \DontPrintSemicolon \tcp*{$|\mathcal{G}|$-length vector, of budgets}
\For(\commentfont{// Initialize}){$g \in \mathcal{G}$}{
    $L(\bm{s}_g, b_g)$ = \textsc{InnerOpt($g, b_g, \bm{s}_g, h$)} $/ |M^{-1}(g)|$ \\
}
\For{$b \in [1,...,B]$}{
    $g^\star = $ \textsc{ArgMin($L(\bm{s}_g, b_g)$)} \\ 
    $b_{g^\star} \pluseq 1$ \\
    $L(\bm{s}_{g^\star}, b_{g^\star})$ = \textsc{InnerOpt($g^\star, b_{g^\star}, \bm{s}_{g^\star}, h$)} $/ |M^{-1}(g)|$ \\
}
\Return $L$, $\bm{b}$
\caption{ERMAB Water Filling: Maximin Reward}
\label{alg:water_filling_mmr}
\end{algorithm}

\begin{algorithm}[t]
\SetAlgoLined
\KwData{$\mathcal{G}, B, \bm{s}, h$}
$\bm{b} = 0$ \DontPrintSemicolon \tcp*{$|\mathcal{G}|$-length vector, of budgets}
$\theta = \max_g\{|M^{-1}(g)|\}$ \\
\For(\commentfont{// Initialize}){$g \in \mathcal{G}$}{
    \textsc{UpSample}($g$,$\theta$)\DontPrintSemicolon \tcp*{Resample arms, until $g$ has size $\theta$}
    $L_0(\bm{s}_g, b_g)$ = \textsc{InnerOpt($g, b_g, \bm{s}_g, h$)} \\
    $L_1(\bm{s}_g, b_g)$ = \textsc{InnerOpt($g, b_g+1, \bm{s}_g, h$)} \\
    $L_{\Delta}(\bm{s}_g, b_g)$ = $\log(L_1(\bm{s}_g, b_g)) - \log(L_0(\bm{s}_g, b_g))$
}
\For{$b \in [1,...,B]$}{
    $g^\star = $ \textsc{ArgMax($L_{\Delta}(\bm{s}_g, b_g)$)} \\ 
    $b_{g^\star} \pluseq 1$ \\
    $L_0(\bm{s}_{g^\star}, b_{g^\star})$ = $L_1(\bm{s}_{g^\star}, b_{g^\star})$ \\
    $L_1(\bm{s}_{g^\star}, b_{g^\star})$ = \textsc{InnerOpt($g^\star, b_{g^\star}+1, \bm{s}_{g^\star}, h$)} \\
    $L_{\Delta}(\bm{s}_{g^\star}, b_{g^\star})$ = $\log(L_1(\bm{s}_{g^\star}, b_{g^\star})) - \log(L_0(\bm{s}_{g^\star}, b_{g^\star}))$
}
\textsc{Rescale}($\bm{b}, \mathcal{G}, \theta$) \DontPrintSemicolon \tcp*{Rescale budgets proportional to original group size}
\Return $L$, $\bm{b}$
\caption{ERMAB Greedy: Max Nash Welfare}
\label{alg:greedy_mnw}
\end{algorithm}

\subsection{Generality and Extensibility}
There are three important points. First is that the ERMAB framework in Eq.~\ref{eq:general_equity_objective} allows for any general function $f$, making extensions of our work to a broader class of equity functions a clear conceptual next step. Second, we bound $V_g$ with the Lagrange relaxation $L_g$ to enable computational feasibility. However, if one were to identify either a tighter bound on $V_g$ or a bound that is even more computationally convenient, one could simply pass their knew bound computation as the \textsc{InnerOpt} with to Algs.~\ref{alg:water_filling_mmr} and \ref{alg:greedy_mnw}, and they would proceed essentially the same.
Finally, the \textsc{WhittleToLagrange} subroutine we provide is a key boost to the adoptability of our approach, since it provides a lightweight method for converting the state of the art policy that RMAB planners may already have implemented, inherently incorporating any additional efficiencies or specializations they may have developed for their settings.


\section{Results}

We provide simulations across a range of parameter settings and environments. Our new policies provide significant boosts to equity with only modest reductions in utility. Code is available at \url{https://github.com/google-research/socialgood/tree/equitable-rmab}

\paragraph{Policies.}
We compare against the following baseline policies. \textbf{No Action}, which never acts. \textbf{Rand}, which selects $B$ arms to act on each round. \textbf{Opt}, which is the utility maximizing state-of-the-art RMAB Whittle index policy.
Our new algorithms are as follows. \textbf{MMR} solves for budget allocations using Alg.~\ref{alg:water_filling_mmr}, then follows a restricted Whittle index policy within each group's respective budget. \textbf{MNW-EG} solves for budget allocations using Alg.~\ref{alg:greedy_mnw}, then follows a restricted Whittle index policy within each group's respective budget. \textbf{MNW} solves for budget allocations using Alg.~\ref{alg:greedy_mnw} without the $\textsc{UpSample}$ and $\textsc{Rescale}$ steps, so it is subject to the group size bias problem described in section \ref{section:algorithm_mnw}. 

\paragraph{Domains.}
First, we design a \textbf{Synthetic} domain which highlights the key characteristics of each of the algorithms. It contains five groups. Arms in each of the first three groups (A, B, C) respond to intervention with slightly decreasing magnitude. This will cause the utility-maximizing policy to over-exploit arms in groups A and B, at the expense of group C, creating inequity. Arms in groups D and E have little response to intervention, which will cause {MMR} to over-allocate to these groups in the presence of larger budgets, creating inefficiencies. Finally, group C is size 5\%$N$, whereas other groups are size 20-25\%$N$, which will cause the naive {MNW} to over-allocate to group $C$, due to the group-size bias issue. Each arm has 2-states, and the total $N=100$. Full details are in the appendix.

Next, we consider the publicly available \textbf{Maternal Health} environment from \citep{killian2022restless}, which captures engagement behavior of mothers in an automated telehealth program. The planner's goal is to select $B$ listeners each week for intervention to boost their engagement. There are three states per arm: Self-motivated, Persuadable, and Lost Cause. There is a 1:1:3 split of arms with high, medium, and low probability of increasing engagement upon intervention. We mirror these split sizes to create three groups of size 20\%, 20\%, 60\%, where the large group is a parameter that we vary. More details are in the appendix.


\begin{figure}[t]
    \centering
\scalebox{0.6}{    
\begin{tikzpicture}[node distance=0.5cm and 0.5cm,>=stealth',auto, every place/.style={draw,minimum size=2.5cm,text width=2.5cm,align=center,node distance=0.5cm}]
    \node [place] (S1) {\textbf{\texttt{Engaged}}\\\mbox{$\textbf{\texttt{HbA1c}}\bm{\ge 8}$}};
    \node [place] (S2) [right=of S1] {\textbf{\texttt{Engaged}}\\\mbox{$\textbf{\texttt{HbA1c}}\bm{< 8}$}};
    


    \node [place] (S3) [below =of S1] {\textbf{\texttt{Maint.}}\\\mbox{$\textbf{\texttt{HbA1c}}\bm{\ge 8}$}};
    \node [place] (S4) [below =of S2] {\textbf{\texttt{Maint.}}\\\mbox{$\textbf{\texttt{HbA1c}}\bm{<8}$}};
    
    \node [place] (S5) [below =of S3] {\textbf{\texttt{Dropout}}\\\mbox{$\textbf{\texttt{HbA1c}}\bm{\ge 8}$}};
    \node [place] (S6) [below =of S4] {\textbf{\texttt{Dropout}}\\\mbox{$\textbf{\texttt{HbA1c}}\bm{< 8}$}};
    
    \path[->] (S1) edge [bend left=15] node {} (S2);
    \path[->] (S2) edge [bend left=15] node {} (S1);
    \path[->] (S1) edge [loop left] node {} ();
    \path[->] (S2) edge [loop right] node {} ();
    
    \path[->] (S1) edge [bend left=15,dashed] node {} (S3);
    \path[->] (S1) edge [bend right=15,dashed] node {} (S4);
    \path[->] (S2) edge [bend left=15,dashed] node {} (S3);
    \path[->] (S2) edge [bend left=15,dashed] node {} (S4);
    
    \path[->] (S3) edge [bend left=15,dashed] node {} (S4);
    \path[->] (S4) edge [bend left=15,dashed] node {} (S3);
    \path[->] (S3) edge [loop left,dashed] node {} ();
    \path[->] (S4) edge [loop right,dashed] node {} ();
    
    \path[->] (S3) edge [bend left=15] node {} (S1);
    \path[->] (S3) edge [bend left=15] node {} (S2);
    \path[->] (S4) edge [bend right=15] node {} (S1);
    \path[->] (S4) edge [bend left=15] node {} (S2);
    
    \path[->] (S3) edge [bend left=15,dashed] node {} (S5);
    \path[->] (S3) edge [bend right=15,dashed] node {} (S6);
    \path[->] (S4) edge [bend left=15,dashed] node {} (S5);
    \path[->] (S4) edge [bend left=15,dashed] node {} (S6);
    \path[->] (S3) edge [bend right=15] node {} (S5);
    \path[->] (S4) edge [bend right=15] node {} (S6);
    
    \path[->] (S5) edge [bend left=15,dashed] node {} (S6);
    \path[->] (S6) edge [bend left=15,dashed] node {} (S5);
    \path[->] (S5) edge [loop left,dashed] node {} ();
    \path[->] (S6) edge [loop right,dashed] node {} ();

\end{tikzpicture}
}

\caption{Transition graph for a \textbf{Digital Diabetes} arm. Bold (dotted) arrows correspond to $a=1$ ($a=0$). 
}
\label{fig:transition_diagram_simple}
\end{figure}

\begin{figure}[b]
    \centering
    \includegraphics[width=\columnwidth]{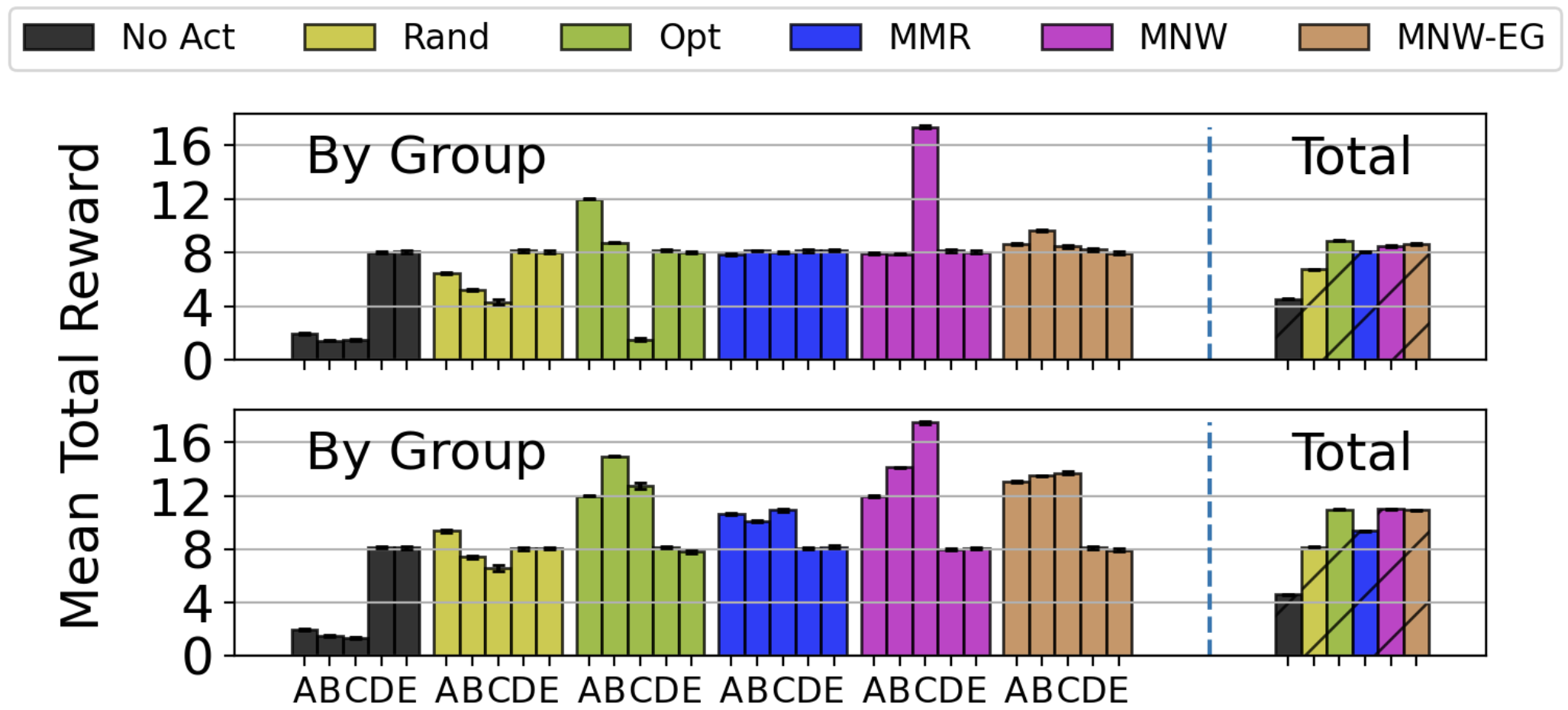}
    \caption{Synthetic domain results. Left shows group results, right shows total reward over all groups, per policy. $N=100$. Top row has $B=20$, bottom row has $B=33$. Groups A, B, and C, respond to intervention, but groups D and E have no response to intervention (MMR over-allocates to D and E). Group C has 5 arms, and all other groups have 20-25 arms (MNW over allocates to C).}
    \label{fig:results:counterexample}
\end{figure}

\begin{figure*}[ht]
    \centering
    \includegraphics[width=0.84\textwidth]{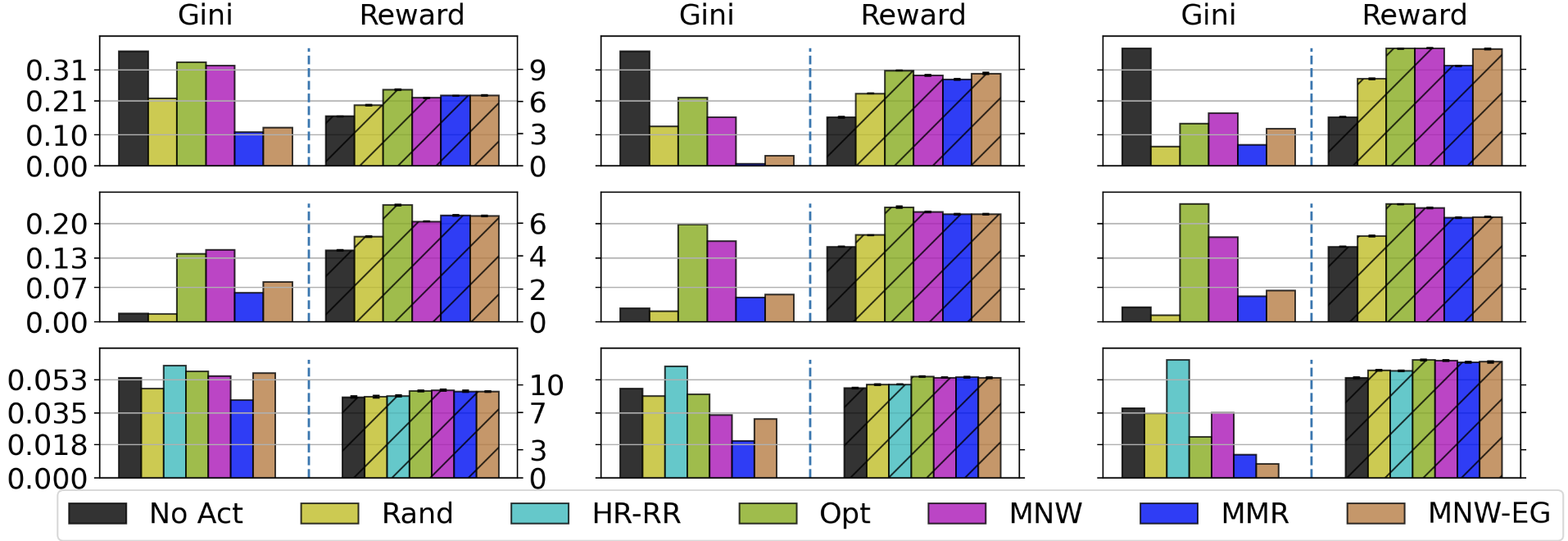}
    \caption{Gini: Lower is better. Reward: Higher is better. Top row: Synthetic. $N=100$ arms. Left to right varies budget in $[10, 20, 33]$. Middle row: Maternal Health. $N=200$, $B=60$. Left to right varies which of 3 groups is the large group (60\%N). Bottom row: Digital Diabetes. $N=300$, $B=75$ Left to right varies $\alpha$ in $[0, 0.5, 1.0]$, the weight on the engagement reward vs.~health reward. Across all settings MMR brings the most equity, while sacrificing efficiency in some cases. MNW-EG strikes a better balance, especially demonstrated in the top row.
    }
    \label{fig:results:banner_offline}
\end{figure*}

Finally, the motivation of this work is digital health care, which recently gained wide popularity \citep{onduo,vida,welldoc,mysugr}. Such programs help patients manage chronic conditions. 
Planners have dual objectives of maximizing engagement and a health outcome of patient cohorts. To build such a decision model, we consider digital care for diabetes, a disease that impacts an estimated 30 million Americans \citep{cdcDiabetesStats}. We name this domain \textbf{Digital Diabetes}.
The goal of digital diabetes care is to reduce HbA1c, a blood sugar biomarker, to a healthy range of $< 8$. However, planners must jointly reason about patients' long-term \textit{engagement} behavior to deliver effective HbA1c interventions, e.g., interventions cannot be delivered to patients who have dropped out. To capture these dynamics we construct the model shown in Fig.~\ref{fig:transition_diagram_simple}, which has a two-dimensional state space: $s_E \in \{$Engaged, Maintenance, Dropout$\}$ for engagement and $s_C \in \{\texttt{HbA1c} <8, 
\texttt{HbA1c} \ge 8\}$ for HbA1c. Engaged patients have higher probability of reaching better health states if intervened, and if a patient reaches dropout, they stay forever, mirroring real-world mechanisms. Finally, The reward $r(s_E \in \{$Engaged, Maintenance$\}) = 1$ and $r(s_E =$ Dropout$) = 0$. Similarly, the reward $r(s_C = $ HbA1c$<8)=1$ and $r(s_C = $ HbA1c$\ge 8)=0$. Given the two dimensions of reward, the planner must choose a parameter $\alpha \in [0,1]$ which determines the relative weight on the engagement and health objectives, i.e., $r(\bm{s}) = \alpha r(s_E) + (1 - \alpha)r(s_C)$.

To instantiate the model, we aggregate publicly available data: (1) on diabetes progression from IBM's widely-used MarketScan Database \citep{marketscan2021} and (2) on health program disengagement from statistics published by the National Diabetes Prevention Program (NDPP) lifestyle change program \citep{cannon2020retention}. We consider groups for three age ranges (30-44, 45-54, 55-64) and two sexes (man, woman), creating six total groups. Complete details of the model and data are in the appendix. Finally, we include a domain-specific baseline \textbf{HR-RR} which prioritizes high-A1c patients who were acted on least recently, a rough proxy for current practice in digital health programs.


\paragraph{Experiments. } All experiments were run for horizon $H=20$ over 25 random seeds. We report total reward averaged over arms or group size. Gini indexes are reported as described in section~\ref{sec:metrics}. First are results from the Synthetic domain with $N=100$ in Fig.~\ref{fig:results:counterexample}. Left of the dashed line shows average outcomes by group, and right shows the average outcome over all $N=100$ arms. Each colored bar corresponds to a policy. The top row has budget $B=20$ and the bottom row has $B=33$. For both budgets, the No Act and Rand policies are both inefficient and inequitable. Conversely, the Opt policy is always the most efficient, but there are wide gaps between the outcomes of various groups, especially groups A and C in the top row. On the other hand, MMR and MNW-EG find alternative policies that
sacrifice little efficiency, while producing far more balanced outcomes. We also see that the naive MNW significantly overallocates to the smallest group C, creating unintentional disparity. This motivates our more nuanced MNW-EG algorithm. On the bottom row, with more budget available, MMR produces worse penalties to efficiency, since it uncompromisngly focuses resources on groups D and E which have little benefit from intervention. On the other hand, MNW-EQ adeptly navigates this tradeoff, maintaining efficiency by allocating resources instead to groups A, B and C, and still equally balancing outcomes among those three groups. This is a key desirable property of the new approach.

In Fig.~\ref{fig:results:banner_offline}, we show similar results across all domains. Each row corresponds to Synthetic, Maternal Health, and Digital Diabetes, respectively. On each plot, left of the dashed line shows the Gini index of group outcomes (lower is better), and right shows the average total reward over all arms (higher is better). For the top row, we vary budget from left to right, $B = [10, 20, 33]$. The same conclusions hold as for Fig.~\ref{fig:results:counterexample}, with the addition of the Gini metric to quantify inequity. We see that MMR and MNW-EQ are always more balanced than Opt, and for $B=20$, they are 20 times and 10 times for more balanced respectively, with an almost negligible drop in total performance. In the middle row (Maternal Health), we set $N=200$; left to right varies which of the three groups has size 60\%N. Across all settings, the equitable policies have a maximum of 15\% drop in efficiency, while gaining 2-4 times improvements in equity. Finally, in the bottom row (Digital Diabetes), we set $N=300$; left to right varies $\alpha = [0, 0.5, 1]$. For $\alpha$ of $0$ and $0.5$, interestingly, Opt, MMR, and MNW-EQ perform virtually the same in efficiency, but MMR distributes the outcomes 1.5 and 2.5 times more equitably. This is especially promising since $\alpha=0$ is the case in which reward corresponds directly to patient blood sugar, demonstrating the potential for this framework to produce more equitable health outcomes, the inspiration of this work. Further analysis in the appendix includes: (1) Pareto analysis over $\alpha$ and (2) resource capacity planning, demonstrating an alternative use of our approach to decide \textit{how many resources should be acquired} to help ensure, e.g., a given cohort reaches health goals equitably.

\paragraph{Conclusion.}
We make key conceptual and algorithmic contributions by introducing Equitable RMABs, and designing two algorithms for reaching equitable solutions. We hope these benefit the work of practitioners addressing resource allocation problems in real-world domains.



\begin{acknowledgements} This work was supported in part by the Army Research Office
by Multidisciplinary University Research Initiative (MURI) grant number W911NF1810208. J.A.K.~was supported by an NSF Graduate Research Fellowship under grant DGE1745303. J.A.K.~was a Student Researcher at Google and Verily for parts of the project.
\end{acknowledgements}

\bibliography{9_references}

\begin{thebibliography}{36}
\providecommand{\natexlab}[1]{#1}
\providecommand{\url}[1]{\texttt{#1}}
\expandafter\ifx\csname urlstyle\endcsname\relax
  \providecommand{\doi}[1]{doi: #1}\else
  \providecommand{\doi}{doi: \begingroup \urlstyle{rm}\Url}\fi

\bibitem[Abbou and Makis(2019)]{abbou2019group}
A.~Abbou and V.~Makis.
\newblock Group maintenance: A restless bandits approach.
\newblock \emph{INFORMS Journal on Computing}, 31\penalty0 (4):\penalty0
  719--731, 2019.

\bibitem[Ayer et~al.(2019)Ayer, Zhang, Bonifonte, Spaulding, and
  Chhatwal]{ayer2019prioritizing}
T.~Ayer, C.~Zhang, A.~Bonifonte, A.~C. Spaulding, and J.~Chhatwal.
\newblock Prioritizing hepatitis c treatment in us prisons.
\newblock \emph{Operations Research}, 67\penalty0 (3):\penalty0 853--873, 2019.

\bibitem[Barman et~al.(2022)Barman, Khan, Maiti, and
  Sawarni]{barman2022fairness}
S.~Barman, A.~Khan, A.~Maiti, and A.~Sawarni.
\newblock Fairness and welfare quantification for regret in multi-armed
  bandits.
\newblock \emph{arXiv preprint arXiv:2205.13930}, 2022.

\bibitem[Bergenstal et~al.(2021)Bergenstal, Layne, Zisser, Gabbay, Barleen,
  Lee, et~al.]{bergenstal2021remote}
R.~M. Bergenstal, J.~E. Layne, H.~Zisser, R.~A. Gabbay, N.~A. Barleen, A.~A.
  Lee, et~al.
\newblock Remote application and use of real-time continuous glucose monitoring
  by adults with type 2 diabetes in a virtual diabetes clinic.
\newblock \emph{Diabetes technology \& therapeutics}, 23\penalty0 (2):\penalty0
  128--132, 2021.

\bibitem[Cannon et~al.(2020)Cannon, Masalovich, Ng, Soler, Jabrah, Ely,
  et~al.]{cannon2020retention}
M.~J. Cannon, S.~Masalovich, B.~P. Ng, R.~E. Soler, R.~Jabrah, E.~K. Ely,
  et~al.
\newblock Retention among participants in the national diabetes prevention
  program lifestyle change program, 2012--2017.
\newblock \emph{Diabetes Care}, 43\penalty0 (9):\penalty0 2042--2049, 2020.

\bibitem[Caragiannis et~al.(2019)Caragiannis, Kurokawa, Moulin, Procaccia,
  Shah, and Wang]{caragiannis2019unreasonable}
I.~Caragiannis, D.~Kurokawa, E.~Moulin, A.~D. Procaccia, N.~Shah, and J.~Wang.
\newblock The unreasonable fairness of maximum nash welfare.
\newblock \emph{ACM Transactions on Economics and Computation (TEAC)},
  7\penalty0 (3):\penalty0 1--32, 2019.

\bibitem[{CDC}(2021)]{cdcA1c}
{US Centers for Disease Control and Prevention} {CDC}.
\newblock All about your {A1C}.
\newblock
  \url{https://www.cdc.gov/diabetes/managing/managing-blood-sugar/a1c.html},
  2021.

\bibitem[{CDC}(2022{\natexlab{a}})]{cdcDiabetesStats}
{US Centers for Disease Control and Prevention} {CDC}.
\newblock National diabetes statistics report website.
\newblock \url{https://www.cdc.gov/diabetes/data/statistics-report/index.html},
  2022{\natexlab{a}}.

\bibitem[{CDC}(2022{\natexlab{b}})]{cdcequity}
{US Centers for Disease Control and Prevention} {CDC}.
\newblock What is health equity?
\newblock \url{https://www.cdc.gov/healthequity/whatis/index.html},
  2022{\natexlab{b}}.

\bibitem[Cohen et~al.(2014)Cohen, Zhao, and Scaglione]{cohen2014restless}
K.~Cohen, Q.~Zhao, and A.~Scaglione.
\newblock Restless multi-armed bandits under time-varying activation
  constraints for dynamic spectrum access.
\newblock In \emph{2014 48th Asilomar Conference on Signals, Systems and
  Computers}, pages 1575--1578. IEEE, 2014.

\bibitem[Deo et~al.(2013)Deo, Iravani, Jiang, Smilowitz, and
  Samuelson]{deo2013improving}
S.~Deo, S.~Iravani, T.~Jiang, K.~Smilowitz, and S.~Samuelson.
\newblock Improving health outcomes through better capacity allocation in a
  community-based chronic care model.
\newblock \emph{Operations Research}, 61\penalty0 (6):\penalty0 1277--1294,
  2013.

\bibitem[Gini(1909)]{gini1909}
C.~Gini.
\newblock Il diverso accrescimento delle classi sociali e la concentrazione
  della ricchezza.
\newblock \emph{Giornale degli Economisti}, 20:\penalty0 27–83, 1909.

\bibitem[Health(2021)]{marketscan2021}
Watson Health.
\newblock Ibm marketscan research databases for life sciences researchers,
  2021.

\bibitem[Herlihy et~al.(2021)Herlihy, Prins, Srinivasan, and
  Dickerson]{herlihy2021planning}
C.~Herlihy, A.~Prins, A.~Srinivasan, and J.~P. Dickerson.
\newblock Planning to fairly allocate: Probabilistic fairness in the restless
  bandit setting, 2021.
\newblock URL \url{https://arxiv.org/abs/2106.07677}.

\bibitem[Jeunen and Goethals(2021)]{jeunen2021top}
O.~Jeunen and B.~Goethals.
\newblock Top-k contextual bandits with equity of exposure.
\newblock In \emph{Proceedings of the 15th ACM Conference on Recommender
  Systems}, pages 310--320, 2021.

\bibitem[Jiang et~al.(2020)Jiang, Song, Weinstein, and Zhang]{jiang2020faster}
S.~Jiang, Z.~Song, O.~Weinstein, and H.~Zhang.
\newblock Faster dynamic matrix inverse for faster lps, 2020.
\newblock URL \url{https://arxiv.org/abs/2004.07470}.

\bibitem[Killian et~al.(2022)Killian, Xu, Biswas, and
  Tambe]{killian2022restless}
J.~A. Killian, L.~Xu, A.~Biswas, and M.~Tambe.
\newblock Restless and uncertain: Robust policies for restless bandits via deep
  multi-agent reinforcement learning.
\newblock In \emph{The 38th Conference on Uncertainty in Artificial
  Intelligence}, Netherlands, 2022. UAI.

\bibitem[Lee et~al.(2019)Lee, Lavieri, and Volk]{lee2019optimal}
E.~Lee, M.~S. Lavieri, and M.~Volk.
\newblock Optimal screening for hepatocellular carcinoma: A restless bandit
  model.
\newblock \emph{Manuf. Serv. Oper. Manag.}, 21\penalty0 (1):\penalty0 198--212,
  2019.

\bibitem[Li and Varakantham(2022)]{li2022towards}
D.~Li and P.~Varakantham.
\newblock Efficient resource allocation with fairness constraints in restless
  multi-armed bandits.
\newblock In \emph{UAI}, pages 1158--1167. PMLR, 2022.

\bibitem[Liu et~al.(2017)Liu, Radanovic, Dimitrakakis, Mandal, and
  Parkes]{liu2017calibrated}
Y.~Liu, G.~Radanovic, C.~Dimitrakakis, D.~Mandal, and D.~C. Parkes.
\newblock Calibrated fairness in bandits.
\newblock \emph{arXiv preprint arXiv:1707.01875}, 2017.

\bibitem[Luss(2012)]{luss2012equitable}
Hanan Luss.
\newblock \emph{Equitable resource allocation: models, algorithms and
  applications}.
\newblock John Wiley \& Sons, 2012.

\bibitem[Mate et~al.(2021)Mate, Perrault, and Tambe]{mate2021risk}
A.~Mate, A.~Perrault, and M.~Tambe.
\newblock Risk-aware interventions in public health: Planning with restless
  multi-armed bandits.
\newblock In \emph{AAMAS}, page 880–888, 2021.

\bibitem[Mate et~al.(2022)Mate, Madaan, Taneja, Madhiwalla, Verma, Singh,
  et~al.]{mate2022field}
A.~Mate, L.~Madaan, A.~Taneja, N.~Madhiwalla, S.~Verma, G.~Singh, et~al.
\newblock Field study in deploying restless multi-armed bandits: Assisting
  non-profits in improving maternal and child health.
\newblock \emph{Proceedings of the AAAI Conference on Artificial Intelligence},
  36:\penalty0 12017--12025, 06 2022.
\newblock \doi{10.1609/aaai.v36i11.21460}.

\bibitem[Modi et~al.(2019)Modi, Mary, and Moy]{modi2019transfer}
Navikkumar Modi, Philippe Mary, and Christophe Moy.
\newblock Transfer restless multi-armed bandit policy for energy-efficient
  heterogeneous cellular network.
\newblock \emph{EURASIP Journal on Advances in Signal Processing},
  2019\penalty0 (1):\penalty0 46, 2019.

\bibitem[Moulin(2004)]{moulin2004fair}
H.~Moulin.
\newblock \emph{Fair division and collective welfare}.
\newblock MIT press, 2004.

\bibitem[mySugr(2022)]{mysugr}
mySugr.
\newblock mysugr.
\newblock \url{https://www.mysugr.com/en}, 2022.

\bibitem[Onduo(2022)]{onduo}
Onduo.
\newblock Onduo.
\newblock \url{https://onduo.com/}, 2022.

\bibitem[Papadimitriou and Tsitsiklis(1999)]{papadimitriou1999complexity}
C.~H. Papadimitriou and J.~N. Tsitsiklis.
\newblock The complexity of optimal queuing network control.
\newblock \emph{Math. Oper. Res.}, 24\penalty0 (2):\penalty0 293--305, 1999.

\bibitem[Patil et~al.(2021)Patil, Ghalme, Nair, and
  Narahari]{patil2021achieving}
V.~Patil, G.~Ghalme, V.~Nair, and Y.~Narahari.
\newblock Achieving fairness in the stochastic multi-armed bandit problem.
\newblock \emph{The Journal of Machine Learning Research}, 22\penalty0
  (1):\penalty0 7885--7915, 2021.

\bibitem[Qian et~al.(2016)Qian, Zhang, Krishnamachari, and
  Tambe]{qian2016restless}
Y.~Qian, C.~Zhang, B.~Krishnamachari, and M.~Tambe.
\newblock Restless poachers: Handling exploration-exploitation tradeoffs in
  security domains.
\newblock In \emph{AAMAS}, page 123–131, 2016.

\bibitem[Ron et~al.(2021)Ron, Ben-Porat, and Shalit]{ron2021corporate}
T.~Ron, O.~Ben-Porat, and U.~Shalit.
\newblock Corporate social responsibility via multi-armed bandits.
\newblock In \emph{Proceedings of the 2021 ACM Conference on Fairness,
  Accountability, and Transparency}, pages 26--40, 2021.

\bibitem[{Vida Health}(2022)]{vida}
{Vida Health}.
\newblock Vida.
\newblock \url{https://www.vida.com/}, 2022.

\bibitem[Villar(2016)]{villar2016indexability}
S.~S. Villar.
\newblock Indexability and optimal index policies for a class of reinitialising
  restless bandits.
\newblock \emph{Probability in the Engineering and Informational Sciences},
  30\penalty0 (1):\penalty0 1--23, 2016.

\bibitem[Weber and Weiss(1990)]{weber1990index}
R.~R. Weber and G.~Weiss.
\newblock On an index policy for restless bandits.
\newblock \emph{J. Appl. Probab.}, 27\penalty0 (3):\penalty0 637--648, 1990.

\bibitem[{Welldoc, Inc}(2022)]{welldoc}
{Welldoc, Inc}.
\newblock Welldoc.
\newblock \url{https://www.welldoc.com/}, 2022.

\bibitem[Whittle(1988)]{whittle1988restless}
P.~Whittle.
\newblock Restless bandits: Activity allocation in a changing world.
\newblock \emph{Journal of Applied Probability}, 25\penalty0 (A):\penalty0
  287--298, 1988.

\end{thebibliography}

\clearpage

\section{Whittle To Lagrange Algorithm}

\begin{algorithm}[h]
\SetAlgoLined
\KwData{$g, b, \bm{s}, h, W(\bm{s})$}
$W^{\textsc{sort}} = $ SortDescending($W(
\bm{s})$) \\
$\lambda^\star$ = $(W^{\textsc{sort}}[b] + W^{\textsc{sort}}[b+1])/2$ \\
\For{$n \in M^{-1}(g)$}{
    $V_n(s_n, \lambda^{\star})$ = \textsc{ValueIteration($n, s_n, \lambda^\star$)} \\
}
$L = \sum_{n=1}^{N}V_n(s_n, \lambda^{\star}) + bh\lambda^\star$ \\
\Return $L$
\caption{Whittle To Lagrange}
\label{alg:whittle_to_lagrange}
\end{algorithm}

\section{Digital Diabetes Model}

The inspiration of our equitable RMAB work derived from our efforts building decision models for the delivery of limited resources in digital health care settings. Here, we give complete details of the Digital Diabetes RMAB model we constructed to model the problem and used for evaluation in the main-text experiments. \textit{The focus of our model is to study the joint engagement-health dynamics of digital health programs in the Type 2 Diabetes (T2D) context, and identify better intervention strategies.}

\subsection{Model}

To capture the joint engagement-health dynamics of digital health programs, we include a dimension for each in our state space. For the T2D domain, we also include a dimension for memory, since intervention effects have a delayed impact on the clinical state. We represent this 3-dimensional state space $\mathcal{S}$ by a three-tuple $(s_E, s_C, s_M)$, where $s_E$ captures the arm's \emph{engagement}, $s_C$ captures the arm's \emph{clinical state}, and $s_M$ is a two-length \emph{memory} vector. All dimensions of the state space are modeled as discrete, where continuous spaces are discretized via threshold rules, described next. 

The engagement dimension, $s_E$, has three states: \{\texttt{Engaged}, \texttt{Maintenance}, \texttt{Dropout}\}. A patient is \texttt{Engaged} if they received an intervention from the care team \textit{and} they responded to the team within the app in the current time period. A patient is in the \texttt{Maintenance} state if they have produced any interactions within the app, but did not respond to an intervention, if it was attempted in the current time period. A patient is in the \texttt{Dropout} state if they have not produced any interactions in the app in the current time period \textit{and} will no longer do so in any future time period (e.g., they have deleted the app). These states are chosen to capture primary high-level engagement dynamics seen in digital health programs. 

The clinical dimension, $s_C$, captures a patient's HbA1c value (equivalently A1c) via two states: \{$\texttt{A1c} < 8$, $\texttt{A1c} \ge 8$\}. This threshold was chosen to model the clinical target for app users in publicly available data, i.e., reducing their HbA1c below 8.

Finally, the memory dimension, $s_M$, is a two-length vector for recording previous values of $s_E$, so its entries can take the same values as the $s_E$ dimension.
The memory serves to implement a 3-month delay between an intervention and its impact on the clinical state. This effect is observed in data and is due to the biological nature of HbA1c progression, i.e., that it is a summary measure of the body's blood sugar over the previous 3 months \cite{cdcA1c}.
Let $s_{M_i}$ reference the $i^{\text{th}}$ entry of the zero-indexed, 2-length memory vector.

Transition dynamics are summarized below and visualized in Figs.~\ref{fig:transition_diagram}--\ref{fig:transition_diagram_clinical}.


\begin{figure}
    \centering
\scalebox{0.7}{    
\begin{tikzpicture}[node distance=0.5cm and 0.5cm,>=stealth',auto, every place/.style={draw,minimum size=3cm,text width=2.5cm,align=center,node distance=0.5cm}]
    \node [place] (S1) {$s_E=\textbf{\texttt{Eng.}}$\\\mbox{$s_C=\textbf{\texttt{A1c}}\bm{\ge 8}$}\\$s_M=\bm{M}$};
    \node [place] (S2) [right=of S1] {$s_E=\textbf{\texttt{Eng.}}$\\\mbox{$s_C=\textbf{\texttt{A1c}}\bm{< 8}$}\\$s_M=\bm{M}$};
    


    \node [place] (S3) [below =of S1] {$s_E= \textbf{\texttt{Maint.}}$\\\mbox{$s_C=\textbf{\texttt{A1c}}\bm{\ge 8}$}\\$s_m=\bm{M}$};
    \node [place] (S4) [below =of S2] {$s_E=\textbf{\texttt{Maint.}}$\\\mbox{$s_C=\textbf{\texttt{A1c}}\bm{<8}$}\\$s_M=\bm{M}$};
    
    \node [place] (S5) [below =of S3] {$s_E=\textbf{\texttt{D.O.}}$\\\mbox{$s_C=\textbf{\texttt{A1c}}\bm{\ge 8}$}\\$s_M=\bm{M}$};
    \node [place] (S6) [below =of S4] {$s_E=\textbf{\texttt{D.O.}}$\\\mbox{$s_C=\textbf{\texttt{A1c}}\bm{< 8}$}\\$s_M=\bm{M}$};

    
    
    
    \path[->] (S1) edge [bend left=15] node {} (S2);
    \path[->] (S2) edge [bend left=15] node {} (S1);
    \path[->] (S1) edge [loop left] node {} ();
    \path[->] (S2) edge [loop right] node {} ();
    
    \path[->] (S1) edge [bend left=15,dashed] node {} (S3);
    \path[->] (S1) edge [bend right=15,dashed] node {} (S4);
    \path[->] (S2) edge [bend left=15,dashed] node {} (S3);
    \path[->] (S2) edge [bend left=15,dashed] node {} (S4);
    
    \path[->] (S3) edge [bend left=15,dashed] node {} (S4);
    \path[->] (S4) edge [bend left=15,dashed] node {} (S3);
    \path[->] (S3) edge [loop left,dashed] node {} ();
    \path[->] (S4) edge [loop right,dashed] node {} ();
    
    \path[->] (S3) edge [bend left=15] node {} (S1);
    \path[->] (S3) edge [bend left=15] node {} (S2);
    \path[->] (S4) edge [bend right=15] node {} (S1);
    \path[->] (S4) edge [bend left=15] node {} (S2);
    
    \path[->] (S3) edge [bend left=15,dashed] node {} (S5);
    \path[->] (S3) edge [bend right=15,dashed] node {} (S6);
    \path[->] (S4) edge [bend left=15,dashed] node {} (S5);
    \path[->] (S4) edge [bend left=15,dashed] node {} (S6);
    \path[->] (S3) edge [bend right=15] node {} (S5);
    \path[->] (S4) edge [bend right=15] node {} (S6);
    
    \path[->] (S5) edge [bend left=15,dashed] node {} (S6);
    \path[->] (S6) edge [bend left=15,dashed] node {} (S5);
    \path[->] (S5) edge [loop left,dashed] node {} ();
    \path[->] (S6) edge [loop right,dashed] node {} ();


\end{tikzpicture}
}

\caption{State transition diagram for one arm in the Digital Diabetes domain. Bold (dotted) arrows are transitions when $a=1$ ($a=0$). 
}
\label{fig:transition_diagram}
\end{figure}
\begin{figure}
    \centering
\scalebox{0.8}{
\begin{tikzpicture}[node distance=2cm and 1cm,>=stealth',auto, every place/.style={draw,minimum size=2cm,text width=2cm,align=center,node distance=1cm}]
    \node [place] (S1) {$s_E$\\\mbox{$s_C=\bm{C}$}\\$s_M=\bm{M}$};
    \node [place] (S2) [right=of S1] {$s_E$\\\mbox{$s_C=\bm{C}$}\\$s_M=\bm{M}$};
    
    \path[->] (S1) edge [bend left=30,dashdotted] node {$\bm{M_1}=\texttt{Eng.}$} (S2);
    \path[->] (S1) edge [bend right=30,dotted] node[below] {$\bm{M_1}\ne\texttt{Eng.}$} (S2);

\end{tikzpicture}
}
\caption{Construction of the delayed intervention effect on clinical state, $s_C$, via zoomed in view of Fig.~\ref{fig:transition_diagram}. Each transition edge in Fig.~\ref{fig:transition_diagram} encodes \emph{two} transition edges with different probabilities, each of which depend on the engagement state of a patient 3 months ago, i.e., entry 1 of the memory state $\bm{M}$.}
\label{fig:transition_diagram_clinical}
\end{figure}


\textbf{Engagement Dynamics. }
The engagement model is made up of four main effects. First, each patient has their own independent probability of responding to an intervention and transitioning to the \texttt{Engaged} state from either the \texttt{Engaged} or \texttt{Maintenance} states. Second, the probability of a patient responding to an intervention if they were previously in the \texttt{Engaged} state is higher than if they were previously in the \texttt{Maintenance} state. Third, the probability of a patient transitioning to \texttt{Dropout} state is lower if the patient receives an intervention, than if they do not. Lastly, patients in the \texttt{Dropout} state will never respond to an intervention. In summary, this corresponds to four open parameters for the engagement dynamics,
$p^{\texttt{I}}_{\texttt{MtoE}}$, 
$p^{\texttt{I}}_{\texttt{EtoE}}$,
$p^{\texttt{I}}_{\texttt{MtoD}}$, and
$p^{\texttt{U}}_{\texttt{MtoD}}$,
where the superscript $I$ or $U$ denote the action. 

\textbf{Clinical Dynamics. }
There are two meaningful clinical dynamics, corresponding to the clinical evolution of patients who did and did not respond to an intervention. Specifically, we assume that patients who received and responded to an intervention (i.e., were in the \texttt{Engaged} state) will have a higher probability of transitioning to a healthy clinical state than a patient who did receive or respond to an intervention. In addition, all effects are delayed by 3 months via the memory states as described in the equations below and shown in Fig.~\ref{fig:transition_diagram_clinical}. Note that we assume that A1c progression is the same for users who were in the \texttt{Maintenance} and \texttt{Dropout} states.
We show the evolution of the clinical state $s'_C$, given the memory state $s_{M_1}$ (i.e. clinical state 3 months ago), and the current clinical state $s_C$, in Table~\ref{tab:evolution}.
\begin{table}[htpb]
    \centering
    \begin{tabular}{|p{0.28\columnwidth}|c|c|}
        \hline 
         \textbf{Evolution of clinical state $P(s^\prime_C = \textbf{\texttt{A1c}}\bm{< 8} | r, c)$} & $s_C=\textbf{\texttt{A1c }}{\bm\ge 8}$ & $s_C=\textbf{\texttt{A1c }}{\bm< 8}$ \\[12pt] \hline
         \centering $s_{M_1}=\texttt{Eng}$ & $p^{\texttt{E}}_{A1c \ge 8}$ & $p^{\texttt{E}}_{A1c < 8}$ \\[5pt]
         \centering $s_{M_1}\ne\texttt{Eng}$ & $p^{\overline{\texttt{E}}}_{A1c \ge 8}$ & $p^{\overline{\texttt{E}}}_{A1c < 8}$ \\[2pt]
         \hline
    \end{tabular}
    \caption{The table shows the evolution of clinical state $P(s^\prime_C = \textbf{\texttt{A1c}}\bm{< 8} | r, c)$ where $r$ represents the memory state $s_{M_1}$ and $c$ represents the current clinical state $s_C$.}
    \label{tab:evolution}
\end{table}
Row 1 of Table~\ref{tab:evolution} represents users who received and responded to an intervention $3$ months ago, whereas row 2 represents users who \textit{did not} receive/respond to an intervention 3 months ago.



Note that this requires estimating only 4 parameters for clinical progression, i.e., $ p^{\texttt{E}}_{A1c \ge 8}, p^{\texttt{E}}_{A1c < 8}, p^{\overline{\texttt{E}}}_{A1c \ge 8}, p^{\overline{\texttt{E}}}_{A1c < 8}$, all of which encode the probability of having A1c less than 8 in the next month.

\textbf{Memory Dynamics. }
The memory dimension is a sliding window to record the engagement state of the previous three months:
$$P(s^\prime_{M_0} = s_E, s^\prime_{M_1} = s_{M_0} | s_{E}, s_{M_0}) =1
$$
Finally, note that arrows in Figs.~\ref{fig:transition_diagram} and~\ref{fig:transition_diagram_clinical} represent joint engagement-clinical-memory transition probabilities. These are obtained by multiplying the engagement, clinical, and memory transition rules.

\textbf{Observability. }
We consider a fully observable model in the experiments work, but accounting for partial observability of the health state is a key area of future study. 

\subsection{Rewards}
We assign rewards based on the current state of each patient, and represent them as $R(\bm{s})$. In general, our objective is to jointly boost engagement and clinical state. To capture that objective, we define rewards for each state dimension $d$ independently, i.e., $r_d(s)$ as:
\begin{align}
r_E(s_E = \texttt{D.O.}) = 0,\hspace{1mm} \nonumber\\
r_E(s_E = \texttt{Maint}) = 1,\hspace{1mm} \\
r_E(s_E = \texttt{Eng}) = 1 \nonumber \\
r_C(s_C = \textbf{\texttt{A1c}}\bm{> 8}) = 0, \nonumber\\
r_C(s_C = \textbf{\texttt{A1c}}\bm{< 8}) = 1
\end{align}
The reward for a patient's full state is then computed as $R([s_E, s_C,\\ s_M]) = \alpha r_E(s_E) + (1-\alpha)r_C(s_C)$. Thus the parameter $\alpha$ represents the relative weight on the engagement reward, is can be tuned based on the planner's desired objective.

\section{Digital Diabetes Data Details}

\noindent\textbf{MarketScan. }To derive baseline statistics on clinical evolution, we utilize IBM's widely-used Truven Health MarketScan Commercial Database \cite{marketscan2021}, a convenience sample of medical insurance claims from privately insured patients in the United States over the years 2018 to 2020, which includes measurements of A1c.
We consider users enrolled for more than 6 months that have T2D only, i.e., excluding those with hypertension, depression, 
heart failure,  or cancer. We then group users by age, sex, and starting A1c to derive statistics per group on monthly A1c change (full details in appendix).
These provide values of $p^{\overline{\texttt{E}}}_{A1c \ge 8}$ and $p^{\overline{\texttt{E}}}_{A1c < 8}$ of approximately $7.5$\% and $0.5\%$, respectively, with about $1\%$ variation across groups.
MarketScan dataset is publicly accessible and provides a reasonable estimate for the background rate of A1c change for users not in a specific digital health program, but still taking steps to manage their T2D on their own. It provides a conservative baseline for our experiments.

For the engagement dynamics, statistics on monthly dropout rates by demographic groups from \emph{digital} health programs are not readily available. Therefore, we use age and sex-based monthly dropout statistics published by the National Diabetes Prevention Program (NDPP) lifestyle change program, primarily made up of in-person meetings \cite{cannon2020retention}.
With monthly dropout rates near 10\%, this again forms a reasonable conservative baseline for experiments, serving as a proxy for patients' willingness to engage with T2D-related ongoing behavior change coaching. 
These statistics populate 
$p^{\texttt{U}}_{\texttt{MtoD}}$ in our model, with about $4\%$ variation between groups. 

The remaining parameters require estimates from digital health program data which is not readily available publicly. Thus we make the following assumptions to instantiate their values.
For $p^{\texttt{E}}_{A1c \ge 8}$ and $p^{\texttt{E}}_{A1c < 8}$, i.e., the clinical probabilities of patients who received and responded to intervention, the patients in age ranges 30-44, 45-54, and 55-64 receive 25\%, 50\%, and 75\% boost in their clinical probability of transitioning to A1c<8, respectively. We found that this leads to clinical trajectories in line with one published observational study of a digital diabetes management program \cite{bergenstal2021remote}, and included age-based variation to align with variation observed in NDPP's monthly dropout statistics. For
$p^{\texttt{I}}_{\texttt{EtoE}}$ and 
$p^{\texttt{I}}_{\texttt{MtoD}}$, we assign values of $99\%$ and $3\%$, respectively, encoding an assumption that patients are more likely to stay in the program if intervened and/or if already engaged. For
$p^{\texttt{I}}_{\texttt{MtoE}}$, we assign values about value of 75\%, but with the same group variation as was present in the data for NDPP's dropout statistics. Finally, we set the probability of observing the clinical state of a patient in the maintenance state, i.e., $q^{\texttt{Obs}}_{\texttt{Maint}}$ to 30\%, in line with statistics from MarketScan.

\subsection{The Marketscan dataset.} 
We consider users enrolled for more than 6 months that have T2D only, i.e., excluding those with hypertension, depression,
heart failure,  or cancer.
We filter to patients who had at least two measurements separated by more than 28 days, which comprise 35k patients.
We then compute each patient's monthly A1c change based on the difference between their A1c readings separated the furthest in time, to reduce noise inherent in more adjacent readings. We then group patients by starting A1c (greater or less than 8) and compute the average and standard deviation of starting A1c and monthly A1c change per group. Finally, we convert those statistics to monthly probabilities $p^{\overline{\texttt{E}}}_{A1c \ge 8}$ and $p^{\overline{\texttt{E}}}_{A1c < 8}$ by computing the number of patients who would reach or remain at an A1c of 8 or less assuming a normal distribution for (1) starting A1c per group and (2) monthly A1c change per group. These provide values of $p^{\overline{\texttt{E}}}_{A1c \ge 8}$ and $p^{\overline{\texttt{E}}}_{A1c < 8}$ of approximately $7.5$\% and $99.5\%$, respectively, with total variation across groups about 1 percentage point. The final table of parameters is given in Table~\ref{table:marketscan}.


\section{Additional Domain Details}

\subsection{Synthetic} 
The \textbf{Synthetic} domain highlights the key characteristics of each of the algorithms. All arms have 2 states. The reward for states are $r(0) = 0$ and $r(1) = 1$. It contains five groups. Arms in each of the first three groups [A, B, C] respond to intervention with slightly decreasing magnitude. Specifically, their transition probabilities $p(s, a, s^\prime)$ are as follows. For all groups in [A, B, C], $p(0, 0, 1) = 0.05$. For group A, $p(1, 0, 1) = 0.35$, $p(0, 1, 1) = 0.99$, and $p(1, 1, 1) = 0.99$. For group B, $p(1, 0, 1) = 0.10$, $p(0, 1, 1) = 0.95$, and $p(1, 1, 1) = 0.95$. For group C, $p(1, 0, 1) = 0.05$, $p(0, 1, 1) = 0.90$, and $p(1, 1, 1) = 0.90$. For groups D and E, $p(0, 0, 1) = 0.4$, $p(1, 0, 1) = 0.4$, $p(0, 1, 1) = 0.4$, and $p(1, 1, 1) = 0.4$. The percentage of arms in groups [A, B, C, D, E] are [0.25, 0.25, 0.05, 0.25, 0.20]. 

\subsection{Maternal Health} 
This is the publicly available \textbf{Maternal Health} environment from \citep{killian2022restless}, which captures engagement behavior of mothers in an automated telehealth program. The planner's goal is to select $b$ listeners each week for intervention to boost their engagement. There are three states per arm: Self-motivated, Persuadable, and Lost Cause. There is a 1:1:3 split of arms with high, medium, and low probability of increasing engagement upon intervention, corresponding to types A, B, and C in \citet{killian2022restless}, respectively.

Let Self-motivated be state 0, Persuadable be state 1, and Lost Cause be state 2. For all arms, the rewards are $R(0) = 1$, $R(1) = 0.5$, and $R(2) = 0$. For all arms, transitions are only possible between adjacent states, i.e., $p(0, \cdot, 2) = p(2, \cdot, 0) = 0$. For group A $p(0, 0, 0) = 0.5$, $p(0, 1, 0) = 0.5$, $p(1, 0, 2) = 0.75$, $p(1, 1, 0) = 0.75$, $p(2, 0, 2) = 0.60$, $p(2, 1, 2) = 0.60$. For group B $p(0, 0, 0) = 0.5$,  $p(0, 1, 0) = 0.5$, $p(1, 0, 2) = 0.60$, $p(1, 1, 0) = 0.40$, $p(2, 0, 2) = 0.60$, $p(2, 1, 2) = 0.60$. For group C $p(0, 0, 0) = 0.5$,  $p(0, 1, 0) = 0.5$, $p(1, 0, 2) = 0.60$, $p(1, 1, 0) = 0.25$, $p(2, 0, 2) = 0.60$, $p(2, 1, 2) = 0.60$. The percentage of arms in each group is $0.2N$ for two of the groups, and the other is $0.6N$ -- the group that is size $0.6N$ is varied in experiments (across columns of row 2 in the main text results). To induce additional variation in the experimental results, for each random seeded experiment, probabilities for each entry for each arm are sampled using its group's $p(\cdot)$ values as the mean, and a standard deviation of $0.2$ times the mean's min absolute distance from 0 or 1 (to scale it appropriately as it approaches 0 or 1). This is detailed in the function $\textsc{sample\_prob}$ in the code.

\section{Additional Results: Digital Diabetes}
We include additional baseline \textbf{HR-Rand} which acts randomly among all high-A1c arms each round (excluded from main text since it performs about the same as HR-RR). Fig.~\ref{fig:appendix:pareto_60} shows the Pareto curve for $B=60$, varying $\alpha$. This underscores the robustness of the gains of RMAB planning across values of $\alpha$. It also demonstrates how planners could tune their objectives, e.g., by choosing $\alpha=0.25$ to get a roughly 10\% boost to engagement with negligible reductions in A1c. 
Fig.~\ref{fig:appendix:capacity-alpha0.5} shows another potential use of our system for digital health planners, namely capacity planning. The dashed line shows a desired final aggregate state of the system. Where each policy line intersects with the dashed line indicates the estimated monthly intervention budget that would be needed to achieve that final state. This gives planners a tool to take estimated engagement-health dynamics data for a prospective cohort, and output an estimated number of support resources needed to help that cohort reach a desired health outcome. Moreover, we see with this analysis that with optimized RMAB policies, one can achieve the example desired final state with roughly 50\% fewer resources than via current intervention assignment rules.

\begin{figure}[H]
    \centering
    \includegraphics[width=\columnwidth]{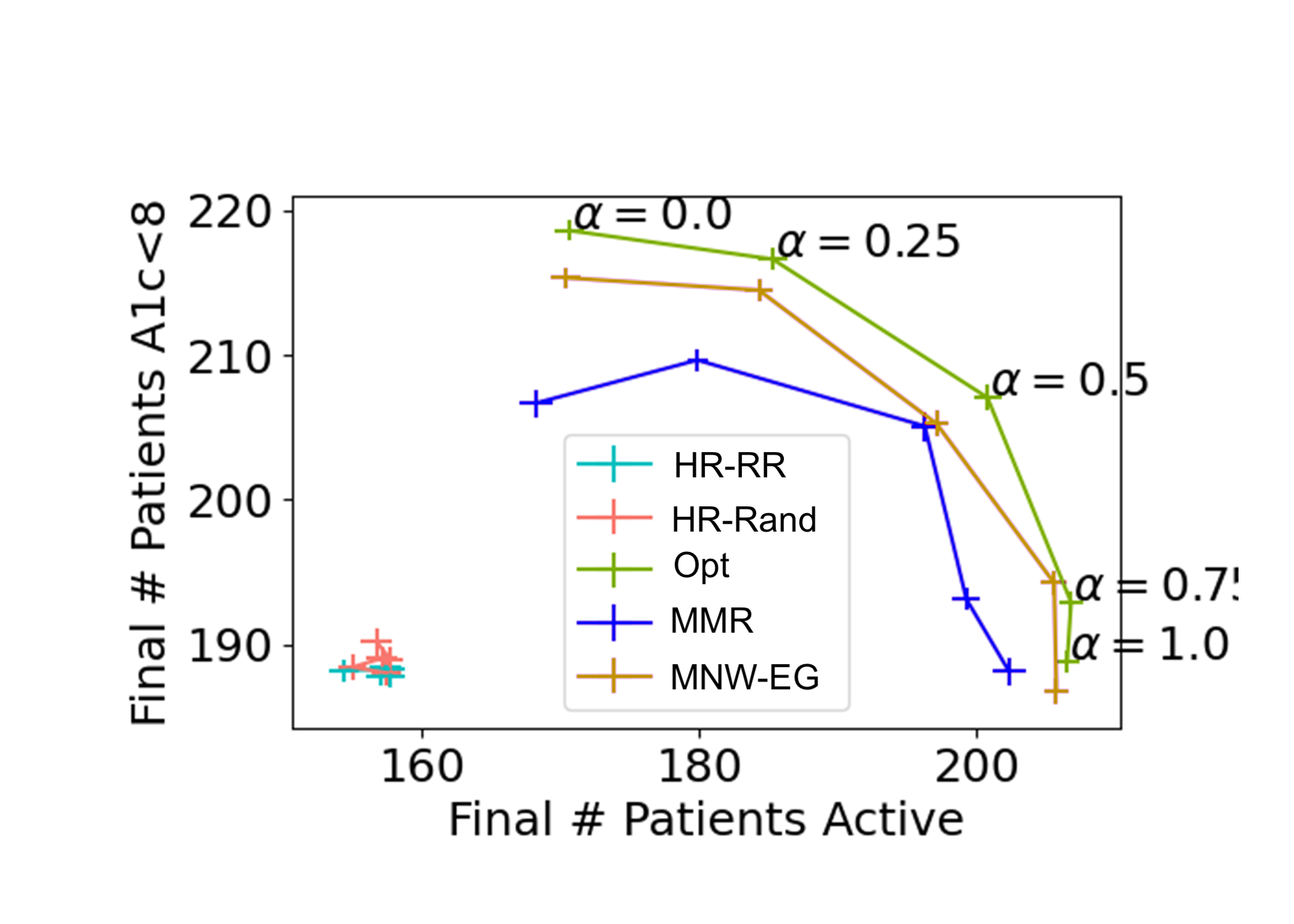}
    \caption{Pareto curve showing engagement vs.~health, with $N=300$ and $B=60$, Digital Diabetes domain.}
    \label{fig:appendix:pareto_60}
\end{figure}




\begin{figure}[H]
    \centering
    \includegraphics[width=\columnwidth]{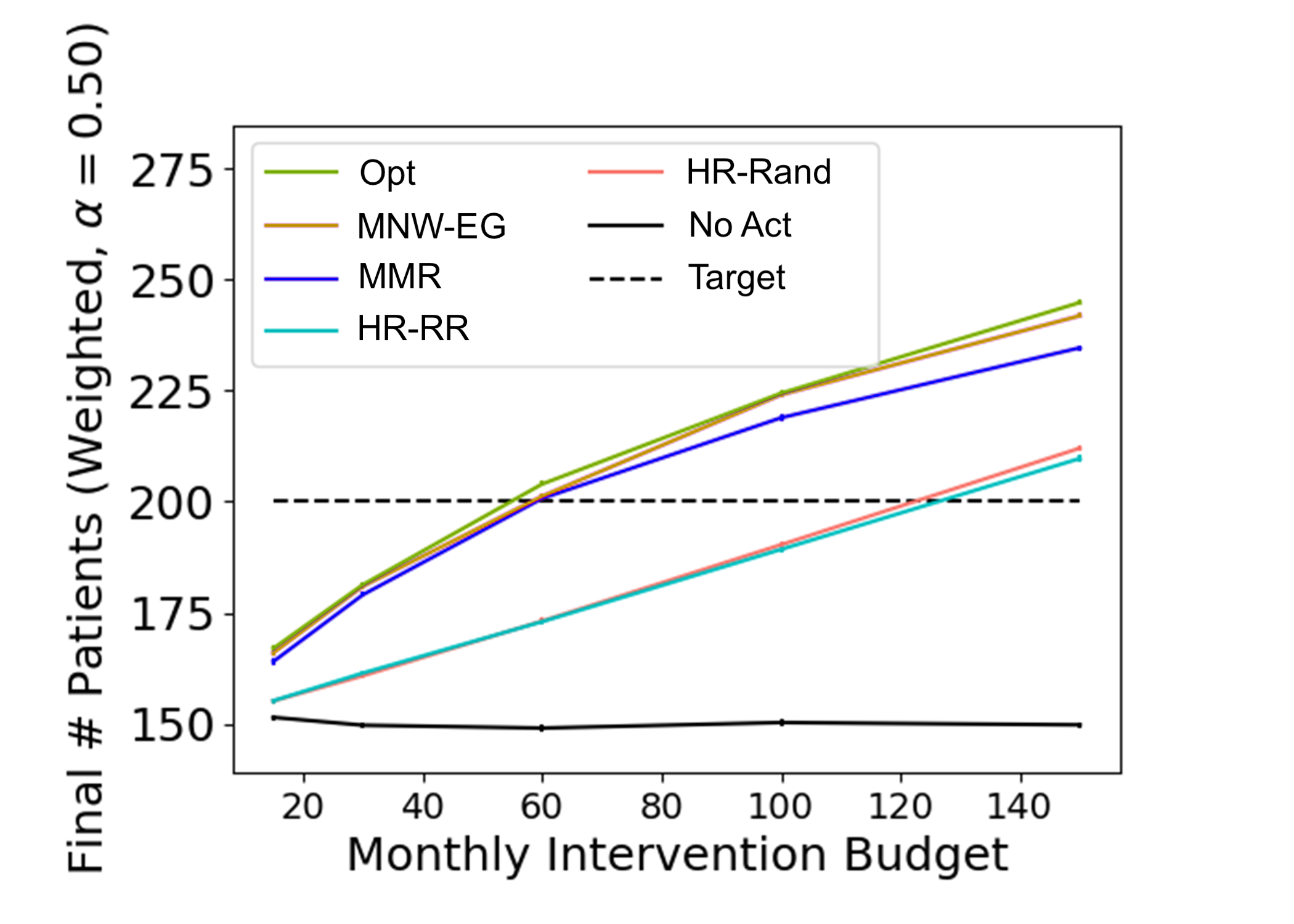}
    \caption{Capacity planning for the Digital Diabetes domain with $\alpha=0.5$, $N=300$}
    \label{fig:appendix:capacity-alpha0.5}
\end{figure}




\begin{table*}[t]\centering
\caption{MarketScan experiment parameters.}
\begin{tabularx}{\textwidth}
{rrrrrrrrrrrl}
\toprule
 Group &  $p^{I}_{MtoE}$ &  $p^{I}_{MtoD}$ &  $p^{I}_{EtoE}$ &  $p^{U}_{MtoD}$ &  $p^{\overline{\texttt{E}}}_{A1c \ge 8}$ &  $p^{\overline{\texttt{E}}}_{A1c < 8}$ &  $p^{\texttt{E}}_{A1c \ge 8}$ &  $p^{\texttt{E}}_{A1c < 8}$ &  frac &  sex &   age \\
\midrule
     0 &     0.560 &      0.03 &      0.99 &     0.122 &       0.071 &       0.992 &     0.089 &     0.994 & 0.175 &    1 & 30-44 \\
     1 &     0.783 &      0.03 &      0.99 &     0.093 &       0.074 &       0.990 &     0.111 &     0.995 & 0.150 &    1 & 45-54 \\
     2 &     0.907 &      0.03 &      0.99 &     0.077 &       0.080 &       0.993 &     0.140 &     0.998 & 0.200 &    1 & 55-64 \\
     3 &     0.560 &      0.03 &      0.99 &     0.122 &       0.069 &       0.992 &     0.087 &     0.994 & 0.150 &    2 & 30-44 \\
     4 &     0.783 &      0.03 &      0.99 &     0.093 &       0.070 &       0.993 &     0.104 &     0.996 & 0.125 &    2 & 45-54 \\
     5 &     0.907 &      0.03 &      0.99 &     0.077 &       0.085 &       0.995 &     0.148 &     0.999 & 0.200 &    2 & 55-64 \\
\bottomrule
\label{table:marketscan}
\end{tabularx}
\end{table*}

\section{Proofs}
\subsection{Proof of Thm. 2}
\textbf{Thm. 2.}
$L_g^0(\cdot,b)$ is monotone increasing and concave in $b$.

\begin{proof}
We drop the subscript $g$ for ease of notation. Let $\boldsymbol{\lambda}$ be the $H$-length vector of Lagrange multipliers $\lambda^k$ for each timestep. Taking the derivative of $L^0$ with respect to $b$, we have

\begin{equation*}\tag{1}
\frac{dL^0}{db} = H\sum^{H-1}_{k=0}\lambda^k
\end{equation*}
Further, all $\lambda_k\ge 0$ following the Lagrangian relaxation of upper bound constraints in the \textit{max} problem. So
\begin{equation*}
\frac{dL^0}{db} \ge 0
\end{equation*}
Thus $L^0(\cdot,b)$ is monotone increasing in $b$. 

Moreover, the derivative of $L^0$ with respect to $\boldsymbol{\lambda}$ for each timestep is:

\begin{align*}
&\frac{dL}{d\boldsymbol{\lambda}} = \tag{2}\\
&\left[\sum_{n=1}^{N}\mathbb{E}[-a^0_{n}] + b, \sum_{n=1}^{N}\mathbb{E}[-a^1_{n}] + b,\ldots,\sum_{n=1}^{N}\mathbb{E}[-a^{H-1}_{n}] + b\right].
\end{align*}

Note that $\sum_{n=1}^{N}\mathbb{E}[-a^k_n]$ is sum of the expected value of the cost of the actions over all $N$ arms at the timestep $k$, with respect to the optimal policy (derivative of the optimal value function) and the state transition probabilities. Setting Eq.~2 to 0 we get that:
\begin{equation*}\tag{3}
\sum_{n=1}^{N}\mathbb{E}[a^k_n] = b, \hspace{2mm} \forall k \in [1,\ldots,T].
\end{equation*}
In other words, the optimal solution sets the Lagrange multipliers (effectively, "action charge multipliers") $\boldsymbol{\lambda}$ such that the optimal policy takes actions with sum total cost of $b$ in expectation at each timestep. This implies that as $b$ increases, the expected cost of actions taken by the optimal policy each round also increases. 

We now establish further properties of Eq.~1, by establishing how $\boldsymbol{\lambda}$ vary with $b$ at the optimal solution of $L^0$. To do so, we reason about whether $\boldsymbol{\lambda}$ must increase or decrease such that $\sum_{n=1}^{N}\mathbb{E}[a^k_n]$ would increase. 

We can observe from Eq.~4 in the main text that $\boldsymbol{\lambda}$ controls the cost of acting, and that the optimal relaxed arm value functions $V^k_n(\cdot, \boldsymbol{\lambda})$ are piece-wise linear convex functions of $-\boldsymbol{\lambda}$ (max over piece-wise linear convex function is also piece-wise linear convex). This implies that, generally, as $\boldsymbol{\lambda}$ decreases, the policy will take actions with larger values of $a_{nj}^k$ on more arms. However, for this to be true for all values of $\boldsymbol{\lambda}$ is equivalent to Whittle's condition of indexability [Whittle, 1989], a common condition for restless bandit problems. Under this condition, decreasing $\boldsymbol{\lambda}$ implies increasing $\sum_{n=1}^{N}\mathbb{E}[a^k_n]$ and vice versa.

Hence, as $b$ increases, the values of $\lambda^k$ decrease at the optimal solution for $L$. And thus, Eq.~1 is a decreasing function in $b$, implying that $L^0$ is concave in $b$.
\end{proof}

\subsection{Proof of Thm. 3}
\textbf{Thm. 3.}
$L_g^0(\cdot,b) - V_g^0(\cdot,b) < \epsilon$ where $\epsilon = (N-b)H$.

\begin{proof}
Again we drop the subscript $g$ for ease of exposition. For this analysis, we assume arms have the same transition functions and start state $s_n^0$. 

Let the policy which never acts on a given arm be $\underline{\pi}\_n$. Let the policy which always acts on a given arm be $\overline{\pi}\_n$.

First, we establish that the Lagrangian bound is tight at $b=0$.

Observe Eq. 3 from the proof of Thm. 2. At $b=0$:

\begin{equation*}
\sum_{n=1}^{N}\mathbb{E}[a^k_n] = 0, \hspace{2mm} \forall k \in [1,\ldots,T].
\end{equation*}

Since $a^k_{nj} \ge 0$ $\forall n, j, k$, the optimal arm value functions in Eq. 4 of the main text $V^k_n(\cdot, \boldsymbol{\lambda})$ must correspond to policies that always take actions with cost exactly $a^k_{nj} = 0$. This implies that the Lagrange bound gives the policy $\underline{\pi}\_n$ for all arms. This also implies that the values $\lambda^k$ do not affect the solution, and thus the problem is decoupled, since only $\lambda^k$ couples the individual arm value functions $V^k_n(\cdot, \lambda)$. Moreover, since $b=0$, the solution to Eq. 4, reduces to:

\begin{align*}\tag{4}
L^0(\boldsymbol{s}^0,b=0) &= \sum_{n=1}^{N}V_{n}^{0}(s_n^0, \underline{\pi}\_n) + b\sum_{t=0}^H \lambda^t \\
&= \sum_{n=1}^{N}V_{n}^{0}(s_n^0, \underline{\pi}\_n)
\end{align*}

where $V_{n}^{0}(s_n^0, \underline{\pi}\_n)$ is the value function of the policy $\underline{\pi}\_n$ for arm $n$ starting at time $0$ and state $s_n^0$.

Conversely, note that any policy at a given budget level $b$ is a lower bound on the un-relaxed value function $V^0(\boldsymbol{s}^0,b)$. Let $\underline{\pi}$ be the trivial policy which never acts on any arms. Clearly the value function of $\underline{\pi}$ is also given by:

\begin{equation*}\tag{5}
\underline{V}^0(\boldsymbol{s}^0, \underline{\pi}) = \underline{V}^0(\boldsymbol{s}^0, b=0)  = \sum_{n=1}^{N}V_{n}^{0}(s_n^0, \underline{\pi}\_n).
\end{equation*}

Since $L^0$ upper bounds $V^0$, $\underline{V}^0$ lower bounds $V^0$, and since $L^0(\cdot,b=0) = \underline{V}^0(\cdot, b=0)$, the upper bound $L^0$ is tight at $b=0$.

Next, we establish that the Lagrangian bound is tight at $b=N$. Assume, without loss of generality, that $a^k_{nj} \le 1$. Then Eq. 3 from the proof of Thm. 2 at $b=N$ gives:

\begin{equation*}
\sum_{n=1}^{N}\mathbb{E}[a^k_n] = N, \hspace{2mm} \forall k \in [1,\ldots,T].
\end{equation*}

In words, each $\lambda^k$ should be set such that the optimal policy takes the most expensive action on every arm in expectation. This is achieved when the action charge multipliers $\lambda^k = 0$ $\forall k$. Alternatively, this can be interpreted as the shadow price of further relaxation of a constraint which is not tight. I.e., when $b=N$ and $a^k_{nj} \le 1$, there is effectively no budget constraint. When all $\lambda^k$ are 0, then the relaxed problem is again decoupled, since $\lambda^k$ are the only terms that couple arms in the Lagrangian relaxation.

So for $b=N$, the solution to Eq. 4 reduces to:

\begin{align*}\tag{6}
L^0(\boldsymbol{s}^0,b=N) &= \sum_{n=1}^{N}V_{n}^{0}(s_n^0, \overline{\pi}\_n) + b\sum_{t=0}^T \lambda^t \\
&= \sum_{n=1}^{N}V_{n}^{0}(s_n^0, \overline{\pi}\_n)
\end{align*}

Conversely, let $\overline{\pi}$ be the policy which always takes the most expensive action on all arms. Clearly the value function of $\overline{\pi}$ is also given by:

\begin{align*}\tag{7}
\underline{V}^0(s_n^0,\overline{\pi}) = \underline{V}^0(s_n^0,b=N) = \sum_{n=1}^{N}V_{n}^{0}(s_n^0, \overline{\pi}\_n)
\end{align*}

Similarly, $L^0(\cdot,b=N) = \underline{V}^0(\cdot, b=N)$, and so the upper bound $L^0$ is tight at $b=N$. Moreover, since $L^0(\cdot,b)$ is monotone increasing (Thm. 2):

\begin{align*}\tag{9}
L^0(\boldsymbol{s}^0,b) \le \sum_{n=1}^{N} V_{n}^{0}(s_n^0, \overline{\pi}\_n) \hspace{2mm} \text{ where } 0 \le b \le N
\end{align*}

Now we consider a lower bound on $V^0(\boldsymbol{s},b)$, namely $\underline{V}^0(\boldsymbol{s}^0,b)$. It is convenient to analyze the value function of a policy which acts on the same $b$ arms each round, consisting of per-arm value functions $V_n(s^0_n, \overline{\pi}\_n)$ which act every round and $V_n(s^0_n, \underline{\pi}\_n)$ which never act:

\begin{align*}\tag{10}
\underline{V}^0(\boldsymbol{s}^0,b) = \sum_{n=1}^b V_{n}^{0}(s_n^0, \overline{\pi}\_n) + \sum_{n=b}^N V_{n}^{0}(s_n^0, \underline{\pi}\_n)
\end{align*}

Then to bound the gap between $L^0$ and $V^0$, it is sufficient to bound the gap between $L^0$ and $\underline{V}^0$. That is, $L^0(\cdot,b) - V^0(\cdot,b) = \epsilon$, where:

\begin{align*}\tag{11}
\epsilon &\le \sum_{n=1}^N V_{n}^{0}(s_n^0, \overline{\pi}\_n) - \sum_{n=1}^{b}V_{n}^{0}(s_n^0, \overline{\pi}\_n) - \sum_{n=b}^N V_{n}^{0}(s_n^0, \underline{\pi}\_n) \\
 &\le \sum_{n=b}^N V_{n}^{0}(s_n^0, \overline{\pi}\_n) - V_{n}^{0}(s_n^0, \underline{\pi}\_n) \\
 &\le (N-b)H
\end{align*}

Where the final step comes from assuming, without loss of generality, that all per-step rewards are between 0 and 1.
\end{proof}

\subsection{Thm. 4 Proof}

\textbf{Thm. 4}
The following inequalities hold $\hspace{1mm} \forall g, \hspace{1mm} \forall a \in M^{-1}(g), \hspace{1mm} \forall b$:

\begin{equation*}\tag{12}
v^a(b) \le V_g(b) \le |M^{-1}(g)| \max_{c\in M^{-1}(g)}v^c(b).
\end{equation*}

\begin{proof}
Recall that we define 
\textit{arm-value functions}, namely, $v^a(b)$, which capture the value function of the single arm $a$, given budget $b$. These are in contrast to \textit{group-value functions} $V_g(b)$ which are the value function of the optimal RMAB policy over all arms $a\in|M^{-1}(g)|$ given per-round budget $b$. We define a composite function $h_g$ to characterize the relationship between the arm-value functions in a group $g$ and the group-value function of the group $g$.

Specifically, $h_g$ maps the set of all arm-value functions in a group $\mathcal{A} = \{v^a : a \in M^{-1}(g)\}$ to group-value functions $V_g$. E.g., if group $g$ has two arms $c$ and $d$, then $V_g(b) = h_g(\{v^c, v^d\})(b)$. We are interested in the relationship of $h_g$ and one of its input arm value functions $v^a$. 

First, we consider the lower bound on $h_g$.  Consider all sets $\mathcal{A}$ such that $|\mathcal{A}| > 2$. Similar, to the technique used in the proof of Thm. 3, we can consider the value of a specific sub-optimal policy to measure the lower bound. Specifically, the optimal value of $h_g(\mathcal{A})(b) = V_g(b)$ is lower bounded by a policy which gives $b$ budget to just one of its arms and 0 to all others. Formally, we have:
\begin{align*}\tag{13}
h_g(\mathcal{A})(b) &\ge v^a(b) + \sum_{i=1}^{|\mathcal{A}|-1}v^{(\mathcal{A} - \{a\})(i)}(0) \hspace{2mm} \forall a \in \mathcal{A} \\
&\ge v^a(b) \hspace{2mm} \forall a \in \mathcal{A}
\end{align*}

Then, we consider the upper bound. The results on the monotone increasing structure of $V_g(b)$ hold also for arm-value functions $v^a(b)$ (consider $V_g(b)$ with group size of 1). Since the composite function $h_g(\mathcal{A})(b)$ causes a limited budget of $b$ to be shared across many arms, the composite function $h_g$ can be seen as reducing the \textit{effective} budget of each individual $v^a$ included in the composite. Alternatively, we can show that $h_g$ gives functions that are upper bounded by composites of arm value functions that \textit{do not} have to share budget. Formally:
\begin{align*}\tag{14}
h_g(\mathcal{A})(b) &\le \sum_{a \in \mathcal{A}} v^a(1) \hspace{3mm} \text{always act on all arms} \\
&\le \sum_{a \in \mathcal{A}} v^a(b) \hspace{2mm} \forall b \ge 1 \\
&\le |\mathcal{A}|\max_{a}v^a(b).
\end{align*}
This gives the proof and the key result that $h_g$ is a composite that scales more slowly in $b$ than the sum of its input functions. 
\end{proof}


\end{document}